\newtheoremstyle{mystyle}
  {}
  {}
  {\upshape}
  {}
  {\bfseries}
  {.}
  { }
  {\thmname{#1}\thmnumber{ #2}\thmnote{ (#3)}}
\theoremstyle{mystyle}
\newtheorem{theorem}{Theorem}
\newtheorem{proposition}[theorem]{Proposition}
\newtheorem{corollary}[theorem]{Corollary}
\theoremstyle{mystyle}
\theoremstyle{remark}
\newcommand{\E}{\mathbb{E}}
\newcommand{\Scal}{\mathcal{S}}
\newcommand\myeq{\vcentcolon=}
\title{TD Convergence: An Optimization Perspective}
\author{
  Kavosh Asadi\thanks{Equal contribution} \\
  Amazon\\
  \And
  Shoham Sabach$^*$\\
  Amazon \& Technion \\
  \AND
  Yao Liu\\
  Amazon \\
  \And
  Omer Gottesman\\
  Amazon \\
  \And
  Rasool Fakoor\\
  Amazon \\
}
\date{March 2023}
\begin{document}

\maketitle

\begin{abstract}
We study the convergence behavior of the celebrated temporal-difference (TD) learning algorithm. By looking at the algorithm through the lens of optimization, we first argue that TD can be viewed as an iterative optimization algorithm where the function to be minimized changes per iteration. By carefully investigating the divergence displayed by TD on a classical counter example, we identify two forces that determine the convergent or divergent behavior of the algorithm. We next formalize our discovery in the linear TD setting with quadratic loss and prove that convergence of TD hinges on the interplay between these two forces. We extend this optimization perspective to prove convergence of TD in a much broader setting than just linear approximation and squared loss. Our results provide a theoretical explanation for the successful application of TD in reinforcement learning.
\end{abstract}
\section{Introduction}

Temporal-difference (TD) learning is arguably one of the most important algorithms in reinforcement learning (RL), and many RL algorithms are based on principles that TD embodies. TD is at the epicenter of some of the recent success examples of RL~\cite{mnih2015human, silver2016mastering}, and has influenced many areas of science such as AI, economics, and neuroscience. Despite the remarkable success of TD in numerous settings, the algorithm is shown to display divergent behavior in contrived examples~\cite{tsitsiklis1996analysis, baird1995residual, sutton2018reinforcement}. In practice, however, divergence rarely manifests itself even in situations where TD is used in conjunction with complicated function approximators. Thus, it is worthwhile to obtain a deeper understanding of TD, and to generalize existing convergence results to explain its practical success.

In this paper, our desire is to study TD through the lens of optimization. We argue that TD could best be thought of as an iterative optimization algorithm, which proceeds as follows:
\begin{equation} \label{eq:H-approx}
    \theta^{t + 1} \approx \arg\min_{w} H(\theta^{t} , w)\ .
\end{equation}
Here, the objective $H$ is constructed by a) choosing a function approximator such as a neural network and b) defining a discrepancy measure between successive predictions, such as the squared error. We will discuss more examples later.

This process involves two different parameters, namely the target parameter $\theta^t$ that remains fixed at each iteration $t$, and the optimization parameter $w$ that is adjusted during each iteration to minimize the corresponding loss function. Using the more familiar deep-RL terminology, $\theta^t$ corresponds to the parameters of the target network, whereas $w$ corresponds to the parameters of the online network. Many RL algorithms can be described by this iterative process with the main difference being the approach taken to (approximately) perform the minimization. At one side of the spectrum lies the original TD algorithm~\cite{sutton1988learning}, which proceeds by taking a single gradient-descent step to adjust $w$ and immediately updating $\theta$. More generally, at each iteration $t$ we can update the $w$ parameter $K$ times using gradient-descent while freezing $\theta$, a common technique in deep RL~\cite{mnih2015human}. Finally, Fitted Value Iteration~\cite{gordon1995stable} lies at the other extreme and solves each iteration exactly when closed-form solutions exist. Therefore, a deeper understanding of the iterative optimization process~\eqref{eq:H-approx} can facilitate a better understanding of TD and related RL algorithms.

The iterative process~\eqref{eq:H-approx} is well-studied in the setting where $H$ is constructed using linear function approximation and squared error. In particular, with $K=1$, meaning TD without frozen target network, the most notable result is due to the seminal work of Tsitsiklis and Van Roy~\cite{tsitsiklis1996analysis} that proved the convergence of the algorithm. More recently, Lee and He~\cite{lee2019target} focused on the more general case of $K\geq 1$. While they took a major step in understanding the role of a frozen target-network in deep RL, they leaned on the more standard optimization tools to show that gradient-descent with a fixed $K$ can be viewed as solving each iteration approximately. Therefore, in their analysis, each iteration results in some error. This error is accumulated per iteration and needs to be accounted for in the final result. Therefore, Lee and He~\cite{lee2019target} fell short of showing convergence to the TD fixed-point, and managed to show that the final iterate will be in the vicinity of the fixed-point defined by the amount of error accumulated over the iterations. 

In this work, our primary contribution is to generalize and improve existing results on TD convergence. In particular, in our theory, we can support the popular technique of employing frozen target networks with general $K\geq 1$, and we also support the use of a family of function approximators and error-measure pairs that includes linear function-approximation and squared error as a special case. To the best of our knowledge, this is the first contraction result that does not restrict to the case of $K=1$, and can also generalize the well-explored linear case with quadratic error. 

To build intuition, we start by taking a deeper dive into one of the classical examples where TD displays divergent behavior~\cite{tsitsiklis1996analysis}. In doing so, we identify two key forces, namely a target force and an optimization force, whose interplay dictates whether TD is guaranteed to converge or that a divergent behavior may manifest itself. If the optimization force can dominate the target force, the process is convergent even when we operate in the famous deadly triad~\cite{sutton2018reinforcement}, namely in the presence of bootstrapping, function approximation, and off-policy updates. Overall, our results demonstrate that TD is a sound algorithm in a much broader setting than understood in previous work.

\section{Problem Setting}
Reinforcement learning (RL) is the study of artificial agents that can learn through trial and error~\citep{sutton2018reinforcement}. In this paper, we focus on the setting where the agent is interested in predicting the long-term goodness or the value of its states. Referred to as the prediction setting, this problem is mathematically formulated by the Markov reward process (MRP)~\citep{puterman2014markov}. In this paper, we consider the discounted infinite-horizon case of MRPs, which is specified by the tuple $\langle \Scal, \mathcal R, \mathcal P, \gamma \rangle$, where $\Scal$ is the set of states. The function $R: \mathcal{S} \to \ \mathbb{R}$ denotes the reward when transitioning out of a state. For any set, we denote the space of probability distributions over it by $\Delta$. The transition $\mathcal P: \mathcal{S} \to \ \Delta(\Scal)$ defines the conditional probabilities over the next states given the current state, and is denoted by $\mathcal P(s'\mid s)$. Finally, the scalar $\gamma \in (0,1)$ geometrically discounts rewards that are received in the future steps.

The primary goal of this paper is to understand the behavior of RL algorithms that learn to approximate the state value function defined as $v(s)\myeq \E\big[\sum_{t=0}^{\infty}\gamma^{t} r_t \big |s_0=s \big]$. To this end, we define the Bellman operator $\mathcal{T}$ as follows:
\begin{equation*}
\label{eq:Bellman}
\big[\mathcal{T}v\big](s):=\mathcal{R}(s)+\!\sum_{s'\in \mathcal{S}}\!\gamma\ \mathcal{P}(s'\mid s) v(s')\ ,
\end{equation*}
which we can write compactly as: $\mathcal{T}v \myeq R + \gamma P v$. In large-scale RL problems the number of states $|\mathcal{S}|$ is enormous, which makes it unfeasible to use tabular approaches. We focus on the setting where we have a parameterized function approximator, and our desire is to find a parameter $\theta$ for which the learned value function $v(s;\theta)$ results in a good approximation of the value function $v(s)$.

A fundamental and quite popular approach to finding a good approximation of the value function is known as temporal difference (TD) learning~\citep{sutton1988learning}. Suppose that a sample $\langle s,r,s' \rangle$ is given where $s'\sim\mathcal{P}(\cdot|s)$. In this case, TD learning algorithm updates the parameters of the approximate value function as follows:
\begin{equation}\theta^{t+1}\leftarrow \theta^{t} + \alpha \big(r+\gamma v(s';\theta^{t}) - v(s;\theta^{t})\big)\nabla_{\theta}v(s;\theta^t)\ ,\label{eq:TD}
\end{equation}
where $\theta^t$ denotes the parameters of our function approximator at iteration $t$. Also, by $\nabla_{\theta}v(s;\theta^t)$ we are denoting the partial gradient of $v(s;\theta)$ with respect to the parameter $\theta$. Note that TD uses the value estimate obtained by one-step lookahead \big($r+\gamma v(s';\theta^{t})$\big) to update its approximate value function $v(s;\theta^{t})$ in the previous step. This one-step look-ahead of TD could be thought of as a sample of the right-hand-side of the Bellman equation. Our focus on TD is due to the fact that many of the more modern RL algorithms are designed based on the principles that TD embodies. We explain this connection more comprehensively in section 3.

To understand the existing results on TD convergence, we define the Markov chain's stationary state distribution $d(\cdot)$ as the unique distribution with the following property: $\forall s'\ {\sum_{s\in\mathcal{S}}d(s)\mathcal{P}(s'\mid s)=d(s')}$. Then, Tsitsiklis and Van Roy show in \cite{tsitsiklis1996analysis} that, under linear function approximation, TD will be convergent if the states $s$ are sampled from the stationary-state distribution. We will discuss this result in more detail later on in section 5.

However, it is well-known that linear TD can display divergent behavior if states are sampled from an arbitrary distribution rather than the stationary-state distribution of the Markov chain. A simple yet classical counter example of divergence of linear TD is shown in Figure~\ref{fig:my_label} and investigated in section 4. First identified in~\cite{tsitsiklis1996analysis}, this example is a very simple Markov chain with two non-terminal states and zero rewards. Moreover, little is known about convergence guarantees of TD under alternative function approximators, or even when the update rule of TD is modified slightly.  

In this paper, we focus on the Markov reward process (MRP) setting which could be thought of as a Markov decision process (MDP) with a single action. TD can display divergence even in the MRP setting~\cite{tsitsiklis1996analysis,sutton2018reinforcement}, which indicates that the presence of multiple actions is not the core reason behind TD misbehavior~\cite{sutton2018reinforcement} --- Divergence can manifest itself even in the more specific case of single action. Our results can naturally be extended to multiple actions with off-policy updates where in update~\eqref{eq:TD} of TD, actions are sampled according to a target policy but states are sampled according to a behavior policy. That said, akin to Tsitsiklis and Van Roy~\cite{tsitsiklis1996analysis}, as well as chapter 11 in the book of Sutton and Barto~\cite{sutton2018reinforcement}, we focus on MRPs to study the root cause of TD in the most clear setting.
\section{TD Learning as Iterative Optimization} \label{Sec:VFwtihOpt}
In this section, we argue that common approaches to value function prediction can be viewed as iterative optimization algorithms where the function to be minimized changes per iteration. To this end, we recall that for a given experience tuple $\langle s,r,s' \rangle$, TD performs the update presented in~\eqref{eq:TD}. A common augmentation of TD is to decouple the parameters to target ($\theta$) and optimization ($w$) parameters~\citep{mnih2015human} and update $\theta$ less frequently. In this case, at a given iteration $t$, the algorithm performs multiple ($K$) gradient steps as follows:
\begin{equation} \label{eq:DQN}
    w^{t, k+1}\leftarrow w^{t,k} + \alpha \big(r+\gamma v(s';\theta^{t}) -  v(s;w^{t,k})\big)\nabla_{\theta}v(s;w^{t,k})\ ,
\end{equation}
and then updates the target parameter $\theta^{t+1}\leftarrow w^{t,K}$ before moving to the next iteration $t+1$. Here $K$ is a hyper-parameter, where $K=1$ takes us back to the original TD update~\eqref{eq:TD}. Observe that the dependence of $v(s';\theta^{t})$ to our optimization parameter $w$ is ignored in this update, despite the fact that an implicit dependence is present due to the final step $\theta^{t+1}\leftarrow w^{t,K}$. This means that the objective function being optimized is made up of two separate input variables\footnote{In fact,~\cite{maei2011gradient} shows that there cannot exist any objective function $J(\theta)$ with a single input variable whose gradient would take the form of the TD update.}. We now define: 
\begin{equation*}
    H(\theta,w) = \frac{1}{2}\sum_{s}d(s)\big(\textbf E_{r,s'}[r+\gamma v(s';\theta)]-v(s;w)\big)^2\ ,
\end{equation*}
where we allow $d(\cdot)$ to be an arbitrary distribution, not just the stationary-state distribution of the Markov chain. Observe that the partial gradient of $H$ with respect to the optimization parameters $w$ is equivalent to the expectation of the update in~\eqref{eq:DQN}. 
Therefore, TD, DQN, and similar algorithms could best be thought of as learning algorithms that proceed by approximately solving for the following sequence of optimization problems:
\begin{equation*}
    \theta^{t+1}\approx \arg\min_{w} H(\theta^t,w)\ ,
\end{equation*}
using first-order optimization techniques. This optimization perspective is useful conceptually because it accentuates the unusual property of this iterative process, namely that the first argument of the objective $H$ hinges on the output of the previous iteration. It also has important practical ramifications when designing RL optimizers~\cite{asadi2023resetting}. Moreover, the general form of this optimization process allows for using alternative forms of loss functions such as the Huber loss~\citep{mnih2015human}, the logistic loss~\citep{bas2021logistic}, or the entropy~\citep{precup1997exponentiated}, as well as various forms of function approximation such as linear functions or deep neural networks. Each combination of loss functions and function approximators yields a different $H$, but one that is always comprised of a function $H$ with two inputs.

A closely related optimization process is one where each iteration is solved exactly:
\begin{equation} \label{eq:TD_exact}
    \theta^{t + 1} \leftarrow \arg\min_{w} H(\theta^{t} , w)\ , 
\end{equation}
akin to Fitted Value Iteration~\citep{gordon1995stable, ernst2005tree}. Exact optimization is doable in problems where the model of the environment is available and that the solution takes a closed form. A pseudo-code of both algorithms is presented in Algorithms~\ref{alg:exact} and~\ref{alg:inexact}.\vspace{-7mm}

\begin{minipage}[t]{0.48\textwidth}
\begin{algorithm}[H]
   \caption{Value Function Optimization with Exact Updates}
\begin{algorithmic}
   \State {\bfseries Input: $\theta^{0},\ T$}
  \For{$t=0$ {\bfseries to} $T-1$}
  \State $\theta^{t+1}\leftarrow \arg\min_{w} H(\theta^t,w)$
   \EndFor
   \State \textbf{Return} $\theta^{T}$

\end{algorithmic}
\label{alg:exact}
\end{algorithm}
\end{minipage}
\hfill
\begin{minipage}[t]{0.48\textwidth}
\begin{algorithm}[H]
   \caption{Value Function Optimization with Gradient Updates}
\begin{algorithmic}
   \State {\bfseries Input: $\theta^{0},\ T,\ K,\ \alpha$}
  \For{$t=0$ {\bfseries to} $T-1$}
   \State $w^{t,0} \leftarrow \theta^{t}$
   \For{$k=0$ {\bfseries to} $K-1$}
   \State $w^{t,k+1} \leftarrow w^{t,k} - \alpha\nabla_w H(\theta^t,w^{t,k})$
   \EndFor
   \State $\theta^{t+1} \leftarrow w^{t,K}$
   \EndFor
   \State \textbf{Return} $\theta^{T}$

\end{algorithmic}
\label{alg:inexact}
\end{algorithm}
\end{minipage}

In terms of the difference between the two algorithms, notice that in Algorithm 1 we assume that we have the luxury of somehow solving each iteration exactly. This stands in contrast to Algorithm 2 where we may not have this luxury, and resort to gradient updates to find a rough approximation of the actual solution. Thus, Algorithm 1 is more difficult to apply but easier to understand, whereas Algorithm 2 is easier to apply but more involved in terms of obtaining a theoretical understanding.

Note that if convergence manifests itself in each of the two algorithms, the convergence point denoted by $\theta^{\star}$ must have the following property:
\begin{equation} \label{eq:convergence_characterization}
    \nabla_{w} H(\theta^{\star},\theta^{\star}) = {\bf 0}\ . 
\end{equation}
This fixed-point characterization of TD has been explored in previous work~\cite{maei2010toward, kolter2011fixed}. Whenever it exists, we refer to $\theta^{\star}$ as the fixed-point of these iterative algorithms. However, convergence to the fixed-point is not always guaranteed~\citep{sutton2018reinforcement} even when we have the luxury of performing exact minimization akin to Algorithm 1. In this paper, we study both the exact and inexact version of the optimization process. In doing so, we identify two forces that primarily influence convergence. To begin our investigation, we study a counter example to build intuition on why divergence can manifest itself. We present a formal analysis of the convergence of the two algorithms in section 6.

\section{Revisiting the Divergence Example of TD}
In this section, we focus on a simple counter example where TD is known to exhibit divergence. First identified by~\cite{tsitsiklis1996analysis}, investigating this simple example enables us to build some intuition about the root cause of divergence in the most clear setting.

Shown in Figure~\ref{fig:my_label}, this example is a Markov chain with two non-terminal states and zero rewards. A linear function approximation, $v(s;\theta)=\phi(s)\theta$, is employed with a single feature where $\phi(s_1)=1$ and $\phi(s_2)=2$. The third state is a terminal one whose value is always zero. The true value function (0 in all states) is realizable with $\theta=0$.

To build some intuition about the root cause of divergence, we discuss the convergence of exact TD with a few state distributions in this example. We desire to update all but the terminal state with non-zero probability. However, to begin with, we focus on a particular extreme case where we put all of our update probability behind the second state:
\begin{equation*}
    \theta^{t+1} \leftarrow \arg\min_{w} H(\theta^{t},w)=\arg\min_{w} \frac{1}{2}\big((1-\epsilon)(\gamma2\theta^{t}) -2w\big)^2\ .
\end{equation*}
We thus have $\nabla_{w} H(\theta^t, w) = 2\big(2w - (1-\epsilon)\gamma2\theta^{t}\big)$, and since $\nabla_{w} H(\theta^{t},\theta^{t+1}) = 0$, we can write: $\theta^{t+1}\leftarrow (2)^{-1}(1-\epsilon)\gamma2\theta^{t}$. The process converges to the fixed-point $\theta=0$ for all values of $\gamma<1$ and $\epsilon\in[0,1]$. This is because the target force of $\theta_t$ (namely $\big((1-\epsilon)\gamma2)\big)$ is always dominated by the optimization force of $w$ (namely 2). Note that updating this state is thus not problematic at all, and that the update becomes even more conducive to convergence when $\gamma$ is smaller and $\epsilon$ is larger.

\begin{wrapfigure}{r}{0.3\textwidth}
\includegraphics[width=0.3\textwidth]{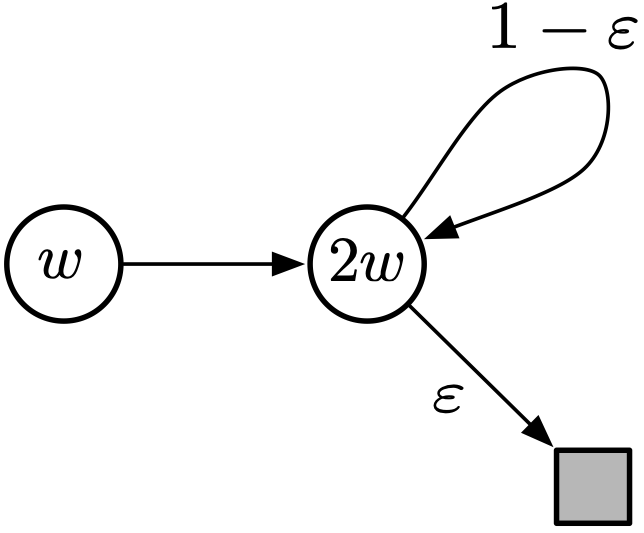}
    \caption{Divergence example of TD ~\citep{tsitsiklis1996analysis}.}
    \label{fig:my_label}
\end{wrapfigure}

We now juxtapose the first extreme case with the second one where we put all of our update probability behind the first state, in which case at a given iteration $t$ we have:
\begin{equation*}
    \theta^{t+1} \leftarrow \arg\min_{w} H(\theta^t, w) = \arg\min_{w}\  \frac{1}{2}(\gamma2\theta^t- w)^2\ .
\end{equation*}
We thus have $\nabla_{w} H(\theta^t,w) = (w - \gamma 2\theta^t)$ and so $\theta^{t+1}\leftarrow (1)^{-1}\gamma2\theta^{t}$. Unlike the first extreme case, convergence is no longer guaranteed. Concretely, to ensure convergence, we require that the target force of $\theta^t$ (namely $\gamma2$) be larger than the optimization force of $w$ (namely 1).

To better understand why divergence manifests itself, note that the two states $s_1$ and $s_2$ have very similar representations (in the form of a single feature). Note also that the value of the feature is larger for the target state than it is for the source state, $\phi(s_2)> \phi(s_1)$. This means that any change in the value of the source state $s_1$, would have the external effect of changing the value of the target state $s_2$. Further, the change is in the same direction (due to the positive sign of the feature in both states) and that it is exactly 2$\gamma$ larger for the target state relative to the source state. Therefore, when $\gamma>1/2$, the function $H$ will be more sensitive to the target parameter $\theta$ relative to the optimization parameter $w$. This is the root cause of divergence.

Moving to the case where we update both states with non-zero probability, first note that if the two updates were individually convergent, then the combination of the two updates would also have been convergent in a probability-agnostic way. Stated differently, convergence would have been guaranteed regardless of the probability distribution $d$ and for all off-policy updates. However, in light of the fact that the optimization force of $s_1$ does not dominate its target force, we need to choose $d(s_1)$ small enough so as to contain the harmful effect of updating this state.

We compute the overall update (under the case where we update the two states with equal probability) by computing the sum of the two gradients in the two extreme cases above:
\begin{equation*}
    (w - \gamma2 \theta^t) +2(2w - \gamma2 (1-\epsilon)\theta^t) = 0\ ,
\end{equation*}
and so $\theta^{t+1}\leftarrow (5)^{-1}\gamma(6-4\epsilon)\theta^t$. Note that even with a uniform update distribution, the two states are contributing non-uniformly to the overall update, and the update in the state $s_2$ is more influential because of the higher magnitude of the feature in this state ($\phi(s_2)=2$ against $\phi(s_1)=1$).

To ensure convergence, we need to have that $\gamma<\frac{5}{6-4\epsilon}$. Notice, again, that we are combining the update in a problematic state with the update in the state that is conducive for convergence. We can contain the negative effect of updating the first state by ensuring that our update in the second state is even more conducive for convergence (corresponding to a larger $\epsilon$). In this case, the update of $s_2$ can serve as a mitigating factor.

We can further characterize the update with a general distribution. In this case, we have:
\begin{equation*}
    d(s_1)\big(\!(w - \gamma2 \theta^t)\big) + \big(1-d(s_1)\big)\big(2(2w - \gamma2 (1-\epsilon)\theta^t)\big)=0\ ,
\end{equation*} 
which gives us a convergent update if: $d(s_1)<\frac{4 - 4\gamma(1-\epsilon)}{3+\gamma2 -4\gamma(1-\epsilon)}\ .$ Both the denominator and the numerator are always positive, therefore regardless of the values of $\epsilon$ and $\gamma$, there always exists a convergent off-policy update, but one that needs to assign less probability to the problematic state as we increase $\gamma$ and decrease $\epsilon$. 

This means that using some carefully chosen off-policy distributions is not only safe, but that doing so can even speed up convergence. This will be the case if the chosen distribution makes the objective function $H$ more sensitive to changes in its first argument (target parameter $\theta$) than its second argument (the optimization parameter $w$). We next formalize this intuition.

\section{Convergence of Linear TD with Quadratic Loss}
We now focus on the case of TD with linear function approximation. In this case, the expected TD update~\eqref{eq:TD} could be written as the composition of two separate operators (with $v_{t} = \Phi \theta_t$):
\begin{equation*}
    v_{t+1}\leftarrow \Pi_{D}\big(\mathcal{T}(v_t)\big)\ ,
\end{equation*}
where the projection operator $\Pi_{D}$ and the Bellman operator $\mathcal{T}$ are defined as follows:
\begin{equation*}
\Pi_{D} = \Phi (\Phi^{\top} D \Phi)^{-1}\Phi^{\top}D, \qquad \text{and}  \qquad \mathcal{T}(v) = R + \gamma P v \ .
\end{equation*}
Here, $D$ is a diagonal matrix with diagonal entries $d(s_1),...,d(s_n)$, where $n$ is the number of states. The projection operator $\Pi_{D}$ is non-expansive under any distribution $d$. However, the Bellman operator is a $\gamma$-contraction under a specific $d$, namely the stationary-state distribution of the Markov chain specified by the transition matrix $P$ as shown by Tsitsiklis and Van Roy~ \cite{tsitsiklis1996analysis}. Therefore, the composition of the two operators is a $\gamma$-contraction when the distribution $d$ is the stationary-state distribution of the Markov chain. 

In light of this result, one may think that TD is convergent in a very narrow sense, specifically when the updates are performed using the stationary-state distribution. But, as we saw with the counter example, in general there may exist many other distributions $d$ that are in fact very conducive for TD convergence. In these cases, the proof technique above cannot provide a tool to ensure convergence because of the reliance of the non-expansive and contraction properties of these operators. Is this a limitation of the TD algorithm itself, or a limitation of this specific operator perspective of TD?
Further, if this operator perspective is limited, is there a different perspective that can give rise to a more general understanding of TD convergence? 

Our goal for the rest of the paper is to develop an alternative optimization perspective that can be applied in a broader setting relative to the operator perspective. To this end, our first task is to show that the optimization perspective can give us new insights even in the linear case and with squared loss functions. We generalize our optimization view of TD in the counter example by defining the objective function $H$ for this case:
\begin{equation}
    H(\theta , w) = \frac{1}{2}\|R + \gamma P\Phi\theta - \Phi w\|^{2}_{D}~\label{eq:linearTDH}, \
\end{equation}
where $||x{||}_{D}=\sqrt{x^{\top}D x}\ $.
Recall that in the exact case, the TD algorithm could succinctly be written as: ${\theta^{t+1}\leftarrow\arg\min_{w} H(\theta^t,w)}$. Following the steps taken with the counter example, we first compute the gradient to derive the target and optimization forces: 
\begin{equation} \label{LinearHGrad}
    \nabla_w H(\theta , w) = \Phi^{\top} D (\Phi w - R-\gamma P\Phi\theta ) = \underbrace{\Phi^{\top} D \Phi}_{\textbf{M}_{w}} w - \underbrace{\gamma\Phi^{\top} D P\Phi}_{\textbf{M}_{\theta}}\theta -\Phi^{\top} D R\ .
\end{equation}
Here we have similar dynamics between $\theta$ and $w$ except that in this more general case, rather than scalar forces as in the counter example, we now have the two matrices $\textbf{M}_{w}$ and $\textbf{M}_{\theta}$. Note that $\textbf{M}_{w}$ is a positive definite matrix, $\lambda_{\min}(\textbf{M}_{w})=\min_{x}x^{\top}\textbf{M}_{w}x/||x||^2>0$, if $\Phi$ is full rank. Below we can conveniently derive the update rule of linear TD by using these two matrices. All proofs are in the appendix.
\begin{restatable}[]{proposition}{linear}
\label{P:LinearTD_equations}
    Let $\{ \theta^{t} \}_{t \in \mathbb{N}}$ be a sequence of parameters generated by Algorithm 1. Then, for the fixed-point $\theta^{\star}$ and any $t \in \mathbb{N}$, we have 
    \begin{equation*}
        \theta^{t + 1} - \theta^{\star} =  \textbf{M}_{w}^{-1}\textbf{M}_{\theta}\left(\theta^{t} - \theta^{\star}\right).
    \end{equation*}
\end{restatable}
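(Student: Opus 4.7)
The plan is to exploit the first-order optimality conditions for both the iterate $\theta^{t+1}$ and the fixed-point $\theta^{\star}$, and then take their difference. Since Algorithm 1 performs the exact minimization $\theta^{t+1} = \arg\min_w H(\theta^t,w)$ and $H(\theta^t,\cdot)$ is a (strictly) convex quadratic in $w$ whenever $\mathbf{M}_w = \Phi^\top D\Phi$ is positive definite, the minimizer is uniquely characterized by the stationarity condition $\nabla_w H(\theta^t,\theta^{t+1}) = \mathbf{0}$.

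First, I would plug $w = \theta^{t+1}$ and $\theta = \theta^t$ into the closed form for $\nabla_w H$ given in~\eqref{LinearHGrad}, obtaining
\begin{equation*}
\mathbf{M}_w\,\theta^{t+1} \;=\; \mathbf{M}_\theta\,\theta^t \;+\; \Phi^\top D R.
\end{equation*}
Next, I would invoke the fixed-point characterization~\eqref{eq:convergence_characterization}, which in the linear-quadratic setting reads $\nabla_w H(\theta^\star,\theta^\star) = \mathbf{0}$, and apply~\eqref{LinearHGrad} once more with $w = \theta = \theta^\star$ to get
\begin{equation*}
\mathbf{M}_w\,\theta^\star \;=\; \mathbf{M}_\theta\,\theta^\star \;+\; \Phi^\top D R.
\end{equation*}
Subtracting these two identities cancels the reward term $\Phi^\top D R$ and yields $\mathbf{M}_w(\theta^{t+1} - \theta^\star) = \mathbf{M}_\theta(\theta^t - \theta^\star)$. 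Finally, left-multiplying by $\mathbf{M}_w^{-1}$ (which exists under the full-rank assumption on $\Phi$ noted just above the statement) gives exactly the claimed recursion.

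There is no real obstacle here; the only subtlety worth flagging is that the argument silently assumes $\mathbf{M}_w$ is invertible so that (i) the $\arg\min$ in Algorithm 1 is well-defined and unique, and (ii) the inverse in the final step makes sense. Both are guaranteed by the full-rank-$\Phi$ hypothesis already mentioned in the text, so I would simply state this assumption at the top of the proof and proceed with the two-line calculation above.
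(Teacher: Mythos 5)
Your proposal is correct and follows essentially the same route as the paper's proof: both use the stationarity condition $\nabla_w H(\theta^t,\theta^{t+1})=\mathbf{0}$ together with the fixed-point identity $\nabla_w H(\theta^\star,\theta^\star)=\mathbf{0}$ to eliminate the reward term $\Phi^\top D R$ and then left-multiply by $\textbf{M}_{w}^{-1}$. The only cosmetic difference is that you subtract the two optimality identities, whereas the paper substitutes $\Phi^\top D R=(\textbf{M}_{w}-\textbf{M}_{\theta})\theta^\star$ directly into the first one; your explicit remark about invertibility of $\textbf{M}_{w}$ is a reasonable addition.
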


This proposition characterizes the evolution of the difference between the parameter $\theta^{t}$ and the fixed-point $\theta^{\star}$. Similarly to our approach with the counter example, we desire to ensure that this difference converges to ${\bf 0}$. The following corollary gives us a condition for convergence~\citep{meyer2000matrix}.

\begin{corollary} \label{P:LinearTD}
    Let $\{ \theta^{t} \}_{t \in \mathbb{N}}$ be a sequence of parameters generated by Algorithm 1. Then, $\{ \theta^{t} \}_{t \in \mathbb{N}}$ converges to the fixed-point $\theta^{\star}$ if and only if the spectral radius of $\textbf{M}_{w}^{-1}\textbf{M}_{\theta}$ satisfies $\rho(\textbf{M}_{w}^{-1}\textbf{M}_{\theta}) < 1$.
\end{corollary}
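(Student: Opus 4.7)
The plan is short: Proposition~\ref{P:LinearTD_equations} linearizes the dynamics of the error $e^{t} \myeq \theta^{t} - \theta^{\star}$, so the corollary reduces to the standard spectral characterization of when a matrix power sequence vanishes.

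First, I would set $A \myeq \textbf{M}_{w}^{-1}\textbf{M}_{\theta}$ and iterate Proposition~\ref{P:LinearTD_equations} to obtain the closed form
\begin{equation*}
e^{t} = A^{t} e^{0}, \qquad t \in \mathbb{N},
\end{equation*}
by a trivial induction on $t$. Convergence of $\theta^{t}$ to $\theta^{\star}$ is then equivalent to $A^{t} e^{0} \to 0$ as $t \to \infty$.

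Next, I would invoke the classical linear-algebra fact recorded in~\cite{meyer2000matrix}: for any square matrix $A$, the powers $A^{t}$ converge to the zero matrix (in any submultiplicative norm, equivalently entrywise) if and only if $\rho(A) < 1$. Combining this with the closed form above, $e^{t} \to 0$ for every initial error $e^{0} \in \mathbb{R}^{d}$ is equivalent to $\rho(A) < 1$. Since Algorithm~\ref{alg:exact} is an initialization-agnostic procedure, we regard $e^{0}$ as ranging over the whole space, and the iff in the corollary follows.

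There is no substantive calculation to perform; the only point deserving a brief comment is the reading of ``convergence'' in the corollary. I would interpret it as ``convergent from every $\theta^{0}$'', which is the natural meaning for an algorithm and which makes both directions of the equivalence clean. Under a weaker, initialization-specific reading, the ``only if'' direction would require a genericity assumption on $e^{0}$ (that it charges every Jordan block of $A$ with modulus $\ge 1$), which is clearly not the intent of the statement. Modulo this remark, the proof is a one-line consequence of Proposition~\ref{P:LinearTD_equations} and the cited spectral fact.
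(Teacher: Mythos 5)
Your proposal is correct and matches the paper's intent exactly: the paper offers no separate proof of this corollary, treating it as the immediate combination of Proposition~\ref{P:LinearTD_equations} with the standard fact from~\cite{meyer2000matrix} that $A^{t}\to 0$ if and only if $\rho(A)<1$. Your remark on reading ``converges'' as ``converges from every initialization'' for the only-if direction is a reasonable clarification of a point the paper leaves implicit.
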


We can employ this Corollary to characterize the convergence of TD in the counter example from the previous section. In this case, we have $\textbf{M}_{w} = 5$ and $\textbf{M}_{\theta} = \gamma(6 - 4\epsilon)$, which give us ${(5)^{-1}\gamma(6 - 4\epsilon)<1}$. This is exactly the condition obtained in the previous section.

Notice that if $d$ is the stationary-state distribution of the Markov chain then $\rho(\textbf{M}_{w}^{-1}\textbf{M}_{\theta}) < 1$~\citep{tsitsiklis1996analysis}, and so the algorithm is convergent. However, as demonstrated in the counter example, the condition can also hold for many other distributions. The key insight is to ensure that the distribution puts more weight behind states where the optimization force of $w$ is dominating the target force due to $\theta$.

Note that the operator view of TD becomes inapplicable as we make modifications to $H$, because it will be unclear how to write the corresponding RL algorithm as a composition of operators. Can we demonstrate that in these cases the optimization perspective is still well-equipped to provide us with new insights about TD convergence? We next answer this question affirmatively by showing that the optimization perspective of TD convergence naturally extends to a broader setting than considered here, and therefore, is a more powerful perspective than the classical operator perspective.
\section{Convergence of TD with General $H$}
Our desire now is to show convergence of TD without limiting the scope of our results to linear approximation and squared loss. We would like our theoretical results to support alternative ways of constructing $H$ than the one studied in existing work as well as our previous section. In doing so, we again show that convergence of TD hinges on the interplay between the two identified forces.

Before presenting our main theorem, we discuss important concepts from optimization that will be used at the core of our proofs. We study convergence of Algorithms 1 and 2 with a general function $H : \mathbb{R}^{n} \times \mathbb{R}^{n} \rightarrow \mathbb{R}$ that satisfies the following two assumptions: 
\begin{enumerate}
    \item The partial gradient $\nabla_{w}H$, is $F_{\theta}$-Lipschitz:
        \begin{equation*}
             \forall \theta_1,\forall \theta_2 \qquad \|\nabla_{w} H(\theta_{1} , w) - \nabla_{w} H(\theta_{2} , w)\| \leq F_{\theta}\|\theta_{1} - \theta_{2}\|\ .   
        \end{equation*}
    \item The function $H(\theta , w)$ is $F_{w}$-strongly convex in $w$:
        \begin{align*}
            \forall w_1,\forall w_2 \qquad \big(\nabla_{w} H(\theta, w_{1}) - \nabla_{w} H(\theta, w_{2})\big)^{\top}(w_{1} - w_{2}) \geq F_{w}\|w_{1} - w_{2}\|^{2}\ .   
        \end{align*}
\end{enumerate}
Note that, in the specific linear case and quadratic loss (our setting in the previous section) these assumptions are easily satisfied~\cite{lee2019target}. More specifically, in that case $F_{\theta} = \lambda_{max}(\textbf{M}_{\theta})$ and $F_{w} = \lambda_{min}(\textbf{M}_{w})$. But the assumptions are also satisfied in a much broader setting than before. We provide more examples in section~\ref{sec:examples}. We are now ready to present the main result of our paper:
\begin{theorem} \label{T:TDConv}
    Let $\{ \theta^{t} \}_{t \in \mathbb{N}}$ be a sequence generated by either Algorithm 1 or 2. If $F_{\theta} < F_{w}$, then the sequence $\{ \theta^{t} \}_{t \in \mathbb{N}}$ converges to the fixed-point $\theta^{\star}$.
\end{theorem}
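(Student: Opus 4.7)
The plan is to prove, for both algorithms, a contraction inequality of the form $\|\theta^{t+1} - \theta^\star\| \leq \rho \, \|\theta^t - \theta^\star\|$ with $\rho < 1$, from which convergence to $\theta^\star$ follows. The two assumptions should slot in cleanly: strong convexity in $w$ will lower-bound things involving $F_w$ (the optimization force), and Lipschitz continuity in $\theta$ will upper-bound things involving $F_\theta$ (the target force). Convergence will come precisely from the ratio $F_\theta/F_w < 1$, mirroring the matrix-spectral-radius condition from Corollary~\ref{P:LinearTD}.

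For Algorithm 1 the strategy is direct. Exact minimization gives $\nabla_w H(\theta^t, \theta^{t+1}) = \mathbf{0}$, and the fixed-point characterization \eqref{eq:convergence_characterization} gives $\nabla_w H(\theta^\star, \theta^\star) = \mathbf{0}$. Subtracting these and inserting the intermediate term $\nabla_w H(\theta^t, \theta^\star)$ yields
\[
    \nabla_w H(\theta^t, \theta^{t+1}) - \nabla_w H(\theta^t, \theta^\star) \;=\; \nabla_w H(\theta^\star, \theta^\star) - \nabla_w H(\theta^t, \theta^\star).
\]
Taking the inner product of both sides with $\theta^{t+1} - \theta^\star$, strong convexity in $w$ lower-bounds the left-hand side by $F_w \|\theta^{t+1} - \theta^\star\|^2$, while Cauchy--Schwarz together with Lipschitz continuity in $\theta$ upper-bounds the right-hand side by $F_\theta \|\theta^t - \theta^\star\| \, \|\theta^{t+1} - \theta^\star\|$. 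Dividing through gives the contraction with rate $F_\theta / F_w < 1$.

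For Algorithm 2 the plan is to interpose the exact inner minimizer $w^\star(\theta^t) := \arg\min_w H(\theta^t, w)$ and use the triangle inequality
\[
    \|\theta^{t+1} - \theta^\star\| \;\leq\; \|w^{t,K} - w^\star(\theta^t)\| \;+\; \|w^\star(\theta^t) - \theta^\star\|.
\]
The second term is handled by the Algorithm 1 argument and contributes a factor $F_\theta / F_w$. The first term is the inner-loop optimization error: $K$ gradient steps on the $F_w$-strongly convex function $H(\theta^t, \cdot)$ shrink it geometrically at rate $(1 - \alpha F_w)^K$ (invoking a standard gradient-descent contraction lemma for strongly convex smooth functions, applicable under an appropriate step size $\alpha$). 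Since the initial inner-loop gap $\|w^{t,0} - w^\star(\theta^t)\| = \|\theta^t - w^\star(\theta^t)\|$ can itself be bounded by $(1 + F_\theta/F_w)\|\theta^t - \theta^\star\|$, choosing $K$ large enough and $\alpha$ small enough makes the composite rate strictly less than one.

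The main obstacle will be the Algorithm 2 bookkeeping: one must show that the inner-loop inexactness can be absorbed without destroying the contraction guaranteed by $F_\theta < F_w$, and some care is needed to establish (or assume) the smoothness-in-$w$ property used by the inner gradient descent analysis. Once those constants are lined up, the statement falls out as a clean quantitative expression of the intuition running through the paper: the target force $F_\theta$ and the optimization force $F_w$ compete, and whenever the latter dominates, the iteration contracts geometrically toward $\theta^\star$.
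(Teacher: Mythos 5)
Your Algorithm 1 argument is correct and is essentially the paper's own proof of Proposition~\ref{P:DiffEx}: strong convexity in $w$ lower-bounds one side by $F_{w}\|\theta^{t+1}-\theta^{\star}\|^{2}$, Cauchy--Schwarz plus the $F_{\theta}$-Lipschitz property upper-bounds the other, and dividing gives the rate $F_{\theta}/F_{w}$. The Algorithm 2 part, however, has a genuine gap. The decomposition
\[
\|\theta^{t+1}-\theta^{\star}\| \;\le\; \|w^{t,K}-w^{\star}(\theta^{t})\| \;+\; \|w^{\star}(\theta^{t})-\theta^{\star}\|
\]
is additive and therefore lossy: with $\eta = F_{\theta}/F_{w}$ it yields a rate of the form $\rho_{K} = (1-\alpha F_{w})^{K/2}(1+\eta)+\eta$, which is below $1$ only when $(1-\alpha F_{w})^{K/2} < (1-\eta)/(1+\eta)$, i.e.\ only for $K$ exceeding a threshold that blows up as $\eta \to 1$ or as the problem becomes ill-conditioned. (Taking $\alpha$ smaller does not help; it degrades the inner gradient-descent contraction.) But the theorem is asserted for Algorithm 2 with \emph{any} $K \ge 1$, and the case $K=1$ --- plain online TD with no frozen target network --- is exactly the case the paper highlights as the payoff of the result. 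Your argument does not reach it.

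The paper avoids this by never routing the estimate through the exact inner minimizer $w^{\star}(\theta^{t})$. Instead, Propositions~\ref{P:Tech1} and~\ref{P:Tech2} establish a per-step recursion directly on the distance to the \emph{outer} fixed point: with $\alpha = 1/L$ and $\kappa = F_{w}/L$,
\[
\|w^{t,k+1}-\theta^{\star}\|^{2} \;\le\; (1-\kappa)\,\|w^{t,k}-\theta^{\star}\|^{2} \;+\; \kappa\,\eta^{2}\,\|\theta^{t}-\theta^{\star}\|^{2},
\]
and unrolling over $K$ steps gives $\sigma_{K}^{2} = (1-\kappa)^{K} + \eta^{2}\bigl(1-(1-\kappa)^{K}\bigr)$, a convex combination of $1$ and $\eta^{2}$ with weight $(1-\kappa)^{K}<1$ on the former. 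Hence $\sigma_{K}<1$ for every $K\ge 1$ as soon as $\eta<1$, with no threshold on $K$, and $\sigma_{K}\to\eta$ as $K\to\infty$, tightly recovering the exact case. To repair your proof you would need to replace the triangle inequality with an interleaved estimate of this kind; the additive error-splitting through $w^{\star}(\theta^{t})$ cannot deliver the $K=1$ case that the theorem claims.
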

In order to prove the result, we tackle the two cases of Algorithm 1 and 2 separately. We first start by showing the result for Algorithm 1 where we can solve each iteration exactly. This is an easier case to tackle because we have the luxury of performing exact minimization, which is more stringent and difficult to implement but easier to analyze and understand. This would be more pertinent to Fitted Value Iteration and similar algorithms. We then move to the case where we approximately solve each iteration (Algorithm 2) akin to TD and similar algorithms. The proof in this case is more involved, and partly relies on choosing small steps when performing gradient descent.

\subsection{Exact Optimization (Algorithm 1)}
In this case, convergence to the fixed-point  $\theta^{\star}$ can be obtained as a corollary of the following result:
\vspace{-5mm}
\begin{restatable}[]{proposition}{general}
\label{P:DiffEx}
    Let $\{ \theta^{t} \}_{t \in \mathbb{N}}$ be a sequence generated by Algorithm 1. Then, we have:
    \begin{equation*}
        \|\theta^{t + 1} - \theta^{\star}\| \leq F_{w}^{-1}F_{\theta}\|\theta^{t} - \theta^{\star}\|\ .
    \end{equation*}
\end{restatable}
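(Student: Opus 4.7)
The plan is to exploit the two optimality conditions that pin down $\theta^{t+1}$ and $\theta^\star$, and then introduce a clever splitting so that the strong convexity in $w$ controls one side and the Lipschitz property in $\theta$ controls the other. First I would write down the two zero-gradient conditions: since $\theta^{t+1} = \arg\min_w H(\theta^t, w)$, first-order optimality gives $\nabla_w H(\theta^t, \theta^{t+1}) = \mathbf{0}$; and by the fixed-point characterization \eqref{eq:convergence_characterization}, $\nabla_w H(\theta^\star, \theta^\star) = \mathbf{0}$. These two identities are the only structural facts about Algorithm 1 that will enter the argument.

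Next I would add and subtract the \emph{hybrid} quantity $\nabla_w H(\theta^t, \theta^\star)$. Rearranging the two optimality conditions gives
\begin{equation*}
    \nabla_w H(\theta^t, \theta^{t+1}) - \nabla_w H(\theta^t, \theta^\star) \;=\; \nabla_w H(\theta^\star, \theta^\star) - \nabla_w H(\theta^t, \theta^\star).
\end{equation*}
The left-hand side has both arguments sharing the same first coordinate $\theta^t$ (so it is suited to the strong-convexity assumption in the second argument), while the right-hand side has both gradients evaluated at the same second coordinate $\theta^\star$ (so it is suited to the $F_\theta$-Lipschitz assumption in the first argument). This splitting is really the whole idea of the proof.

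Then I would take the inner product of both sides with $\theta^{t+1} - \theta^\star$. On the left, applying the $F_w$-strong convexity of $H(\theta^t, \cdot)$ yields a lower bound of $F_w \|\theta^{t+1} - \theta^\star\|^2$. On the right, applying Cauchy--Schwarz and then the $F_\theta$-Lipschitz property of $\nabla_w H(\cdot, \theta^\star)$ yields an upper bound of $F_\theta \|\theta^t - \theta^\star\| \cdot \|\theta^{t+1} - \theta^\star\|$. Combining and dividing through by $\|\theta^{t+1} - \theta^\star\|$ (the case $\theta^{t+1} = \theta^\star$ is immediate) gives the claimed inequality.

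I do not anticipate a serious obstacle: the argument is almost purely algebraic once the splitting is in place. The only subtlety is recognizing that the $F_\theta$-Lipschitz assumption is exactly what converts a difference of gradients evaluated at two different first-arguments into $F_\theta \|\theta^t - \theta^\star\|$, and that the strong convexity must be invoked as a one-point inequality using $\nabla_w H(\theta^t, \theta^{t+1}) - \nabla_w H(\theta^t, \theta^\star)$ rather than as a direct bound on $H$ itself. Note also that the statement already presupposes existence and uniqueness of the minimizer at each iteration and of the fixed-point $\theta^\star$; both follow from $F_w$-strong convexity and the contraction factor $F_\theta/F_w < 1$ guaranteed under the hypothesis of Theorem \ref{T:TDConv}, but are not needed to establish the per-iteration contraction itself.
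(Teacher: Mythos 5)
Your proposal is correct and follows essentially the same route as the paper's proof: both arguments hinge on the hybrid gradient $\nabla_w H(\theta^t,\theta^\star)$, bounding it from below via $F_w$-strong convexity applied to the pair $(\theta^{t+1},\theta^\star)$ and from above via Cauchy--Schwarz together with the $F_\theta$-Lipschitz property in the first argument, then dividing by $\|\theta^{t+1}-\theta^\star\|$. The only cosmetic difference is that you phrase the splitting as an identity between gradient differences before taking inner products, whereas the paper starts directly from the strong-convexity inequality and simplifies using $\nabla_w H(\theta^t,\theta^{t+1})=\mathbf{0}$.
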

From this result, we immediately obtain that the relative strength of the two forces, namely the conducive force $F_{w}$ due to optimization and the detrimental target force $F_{\theta}$, determines if the algorithm is well-behaved. The proof of Theorem \ref{T:TDConv} in this case follows immediately from Proposition \ref{P:DiffEx}. 

\subsection{Inexact Optimization (Algorithm 2)}
So far we have shown that the optimization process is convergent in the presence of exact optimization. This result supports the soundness of algorithms such as Fitted Value Iteration, but not TD yet, because in the case of TD we only roughly approximate the minimization step. Can this desirable convergence result be extended to the more general setting of TD-like algorithms where we inexactly solve each iteration by a few gradient steps, or is exact optimization necessary for obtaining convergence? Answering this question is very important because in many settings it would be a stringent requirement to have to solve the optimization problem exactly.

We now show that indeed convergence manifests itself in the inexact case as well. In the extreme case, we can show that all we need is merely one single gradient update at each iteration. This means that even the purely online TD algorithm, presented in update~\eqref{eq:TD}, is convergent with general $H$ if the optimization force can dominate the target force. 

However, it is important to note that because we are now using gradient information to crudely approximate each iteration, we need to ensure that the step-size parameter $\alpha$ is chosen reasonably. More concretely, in this setting we need an additional assumption, namely that there exists an $L>0$ such that:
    \begin{equation*}
        \forall w_1,\forall w_2 \qquad \|\nabla_{w} H(\theta , w_{1}) - \nabla_{w} H(\theta , w_{2})\| \leq L\|w_{1} - w_{2}\|\ .   
    \end{equation*}
Notice that such an assumption is quite common in the optimization literature (see, for instance, \cite{beck2017first}). Moreover, it is common to choose $\alpha=1/L$, which we also employ in the context of Algorithm~2. We formalize this in the proposition presented below:
\begin{restatable}[]{proposition}{main} \label{P:DiffNes}
        Let $\{ \theta^{t} \}_{t \in \mathbb{N}}$ be a sequence generated by Algorithm 2 with the step-size $\alpha = 1/L$. Then, we have:
\begin{equation*}
    \|\theta^{t + 1} - \theta^{\star}\| \leq \sigma_{K}\|\theta^{t} - \theta^{\star}\|\ ,
\end{equation*}
where:
\begin{equation*}
    \sigma_{K}^{2} \equiv \left(1 - \kappa\right)^{K}\left(1 - \eta^{2}\right) + \eta^{2}\ .
\end{equation*}
with $\kappa \equiv L^{-1}F_{w}$ and $\eta \equiv F_{w}^{-1}F_{\theta}$.
\end{restatable}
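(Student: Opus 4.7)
The plan is to isolate two contraction mechanisms and then combine them. Let $w^{\star}(\theta^t) := \arg\min_w H(\theta^t, w)$ denote the exact inner minimizer that Algorithm~1 would produce. The first building block is the ``target-force'' contraction, which follows from the same argument as Proposition~\ref{P:DiffEx}: strong convexity of $H(\theta^t, \cdot)$ together with the Lipschitzness of $\nabla_w H(\cdot, w)$ give
\begin{equation*}
\|w^{\star}(\theta^t) - \theta^{\star}\| \leq \eta \|\theta^t - \theta^{\star}\|,
\end{equation*}
with $\eta = F_w^{-1}F_\theta$. This quantifies how much the surrogate minimizer at iteration $t$ drifts from the outer fixed point.

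The second building block handles the inner gradient loop. Since $f(w) := H(\theta^t, w)$ is $L$-smooth and $F_w$-strongly convex, the classical analysis of gradient descent with step-size $\alpha = 1/L$ yields the per-step contraction $\|w^{t,k+1} - w^{\star}(\theta^t)\|^2 \leq (1-\kappa)\|w^{t,k} - w^{\star}(\theta^t)\|^2$, and iterating $K$ times from $w^{t,0} = \theta^t$ gives
\begin{equation*}
\|\theta^{t+1} - w^{\star}(\theta^t)\|^2 \leq (1-\kappa)^K \|\theta^t - w^{\star}(\theta^t)\|^2.
\end{equation*}

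The main obstacle is combining these two bounds so that the resulting factor is precisely $\sigma_K^2 = (1-\kappa)^K(1 - \eta^2) + \eta^2$, rather than a strictly weaker triangle-inequality estimate such as $[(1-\kappa)^{K/2}(1+\eta) + \eta]^2$. A useful guide comes from the linear-quadratic case, where a direct computation gives the exact decomposition
\begin{equation*}
\theta^{t+1} - \theta^{\star} = E(\theta^t - \theta^{\star}) + (I - E)\bigl(w^{\star}(\theta^t) - \theta^{\star}\bigr), \qquad E = (I - L^{-1}\textbf{M}_w)^K,
\end{equation*}
with $\|E\| \leq (1-\kappa)^K$. This writes $\theta^{t+1} - \theta^{\star}$ as an ``operator-level convex combination'' of the two vectors that appear in the building blocks, and when combined with the bound $\|w^{\star}(\theta^t) - \theta^{\star}\| \leq \eta\|\theta^t - \theta^{\star}\|$ and the scalar inequality $[e + (1-e)n]^2 \leq e(1 - n^2) + n^2$ for $e \in [0, (1-\kappa)^K]$ and $n \in [0, \eta]$, the monotonicity of the right-hand side in $e$ and $n$ produces the bound $\sigma_K^2$.

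For the general $H$ setting the identity above is only an inequality, and so the combination must be obtained without an explicit linear structure. My plan is to carry the linear-quadratic intuition over via a per-step Lyapunov-type recursion on $\|w^{t,k} - \theta^{\star}\|^2$: expand this quantity using the definition of the gradient step $w^{t,k+1} = w^{t,k} - L^{-1}\nabla_w H(\theta^t, w^{t,k})$ and invoke the $F_w$-strong convexity, $L$-smoothness, and $F_\theta$-Lipschitzness assumptions to obtain a one-step inequality whose $K$-fold iteration yields $\|\theta^{t+1} - \theta^{\star}\|^2 \leq \sigma_K^2 \|\theta^t - \theta^{\star}\|^2$. Carefully controlling the cross term $\langle \theta^{t+1} - w^{\star}(\theta^t),\, w^{\star}(\theta^t) - \theta^{\star}\rangle$ that arises in this expansion, so that the recursion coefficients come out exactly as $(1-\kappa)$ and $\kappa$, is the key technical step, and it is where I expect the bulk of the work to lie.
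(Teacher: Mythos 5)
Your plan correctly identifies the destination --- a one-step recursion on $\|w^{t,k}-\theta^{\star}\|^{2}$ of the form $\|w^{t,k+1}-\theta^{\star}\|^{2}\le(1-\kappa)\|w^{t,k}-\theta^{\star}\|^{2}+\kappa\eta^{2}\|\theta^{t}-\theta^{\star}\|^{2}$, whose $K$-fold iteration and geometric summation give exactly $\sigma_{K}^{2}$ --- but the derivation of that recursion, which is the entire content of the proof, is left unexecuted. Moreover, the two mechanisms you sketch for obtaining it do not deliver the stated constants. In the linear case, the decomposition $\theta^{t+1}-\theta^{\star}=E(\theta^{t}-\theta^{\star})+(I-E)(w^{\star}(\theta^{t})-\theta^{\star})$ combined with the scalar Jensen inequality $[e+(1-e)n]^{2}\le e(1-n^{2})+n^{2}$ only yields $\sigma_{K}^{2}$ if $E$ and the map $\theta^{t}-\theta^{\star}\mapsto w^{\star}(\theta^{t})-\theta^{\star}$ were simultaneously diagonalizable; with only the norm bounds $\|E\|\le(1-\kappa)^{K}$ and $\|w^{\star}(\theta^{t})-\theta^{\star}\|\le\eta\|\theta^{t}-\theta^{\star}\|$, a per-coordinate Jensen argument gives at best $(1-\kappa)^{K}+\eta^{2}$, which exceeds $1$ (and hence proves nothing) when $\eta$ is close to $1$. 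For general $H$ the situation is worse: the cross term $\langle\theta^{t+1}-w^{\star}(\theta^{t}),\,w^{\star}(\theta^{t})-\theta^{\star}\rangle$ you propose to control has no sign and no obvious bound from the stated assumptions.

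The paper's proof never introduces the inner minimizer $w^{\star}(\theta^{t})$ at all, and this is the idea you are missing. It works directly against the global fixed point $\theta^{\star}$: first (Proposition \ref{P:Tech1}) it bounds the function-value gap $H(\theta^{t},\theta^{\star})-H(\theta^{t},w^{t,k})$ by $\frac{F_{\theta}^{2}}{2F_{w}}\|\theta^{t}-\theta^{\star}\|^{2}-\left(\frac{1}{\alpha}-\frac{L}{2}\right)\|w^{t,k+1}-w^{t,k}\|^{2}$, using strong convexity anchored at $\theta^{\star}$, the fixed-point identity $\nabla_{w}H(\theta^{\star},\theta^{\star})={\bf 0}$, Young's inequality $a^{\top}b\le\frac{1}{2F_{w}}\|a\|^{2}+\frac{F_{w}}{2}\|b\|^{2}$ (whose choice of parameter $F_{w}$ is precisely what makes the $\frac{F_{w}}{2}\|w^{t,k+1}-\theta^{\star}\|^{2}$ terms cancel), the $F_{\theta}$-Lipschitz assumption, and the descent lemma. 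Then (Proposition \ref{P:Tech2}) it expands $\|w^{t,k+1}-\theta^{\star}\|^{2}$ via the gradient step and converts the inner product $\langle\nabla_{w}H(\theta^{t},w^{t,k}),\theta^{\star}-w^{t,k}\rangle$ into that function-value gap using strong convexity again, which is exactly where the coefficients $(1-\alpha F_{w})$ and $\frac{\alpha F_{\theta}^{2}}{F_{w}}$ emerge. Without these two steps your proposal is a statement of the target inequality, not a proof of it.
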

Notice that $\kappa$, which is sometimes referred to as the inverse condition number in the optimization literature, is always smaller than 1. Therefore, we immediately conclude Theorem 3. Indeed, since the optimization force dominates the target force (meaning $\eta<1$), Algorithm 2 is convergent. Notice that a surprisingly positive consequence of this theorem is that we get convergent updates even if we only perform one gradient step per iteration ($K=1$). In deep-RL terminology, this corresponds to the case where we basically have no frozen target network, and that we immediately use the new target parameter $\theta$ for the subsequent update. 

To further situate this result, notice that as $K$ approaches $\infty$ then $\sigma_K$ approaches $\eta$, which is exactly the contraction factor from Proposition \ref{P:DiffEx} where we assumed exact optimization. So this proposition should be thought of as a tight generalization of Proposition \ref{P:DiffEx} for the exact case. With a finite $K$ we are paying a price for the crudeness of our approximation.

Moreover, another interesting reduction of our result is to the case where the target force due to bootstrapping in RL is absent, meaning that $F_{\theta}\equiv 0$. In this case, the contraction factor $\sigma_{K}$ reduces to $(1-\kappa)^{K/2}$, which is exactly the known convergence rate for the gradient-descent algorithm in the strongly convex setting~\citep{beck2017first}. 
\subsection{Examples}
\label{sec:examples}

We focus on two families of loss functions where our assumptions can hold easily. To explain the first family, recall that TD could be written as follows:
$$\theta^{t+1} \leftarrow \arg\min_{w} H(\theta^{t},w).$$
Now suppose we can write the function $H(\theta,w)$ as the sum of two separate functions $H(\theta,w) = G(\theta, w ) + L(w)$, where the function $L(w)$ is strongly convex with respect to $w$. This setting is akin to using ridge regularization~\cite{tibshirani1996reg}, which is quite common in deep learning (for example, the very popular optimizer AdamW~\cite{loshchilov2019adamw}). This allows us to now work with functions $G$ that are only convex (in fact, weakly convex is enough) with respect to $w$. We provide two examples:
\begin{itemize}
    \item Suppose we would like to stick with linear function approximation. Then, the function $G$ could be constructed using any convex loss where $\nabla_w G(\theta,w)$ is Lipschitz-continuous with respect to $\theta$. Examples that satisfy this include the Huber loss~\cite{mnih2015human} or the Logistic loss~\cite{bas2021logistic}.
    \item Suppose we want to use the more powerful convex neural networks~\cite{amos2017input}. We need the error to be convex and monotonically increasing so that $G$ is still convex. This is due to the classical result on the composition of convex functions. One example is the quadratic error where we restrict the output of the function approximator to positive values. Such neural nets are also Lipschitz continuous given proper activation functions such as ReLU.
\end{itemize}

Beyond this family, we have identified a second family, namely the control setting where a greedification operator is needed for bootstrapping. For example, with the quadratic error we could have:
\begin{equation*}
    H(\theta,w)=\frac{1}{2}\sum_sd(s)\sum_a\pi(a|s)(\E_{s'}[r+ \gamma\max_{a'}q(s',a',\theta)]-q(s,a,w))^2,
\end{equation*}
where $q$ is the state-action value function parameterized by either $\theta$ or $w$.

We again need the two assumptions, namely strong-convexity with respect to $w$ and Lipschitzness of $\nabla_w H(\theta,w)$ with respect to $\theta$ to hold. Actually, Lee and He~\cite{lee2019target} already showed the strong convexity with respect to $w$, but we need to still show the Lipschitz property of $\nabla_w H(\theta,w)$ with respect to $\theta$. Note that they showed the Lipschitz property only with respect to $w$ and not with respect to $\theta$. We are now able to show this result. Please see Proposition~\ref{prop:control} and its proof in the appendix. Our proof also supports other greedification operators, such as softmax~\cite{asadi2017alternative}, so long as its non-expansive.



\section{Related Work}
In this paper, we studied convergence of TD through the lens of optimization. The underlying principles of TD are so central to RL that a large number of RL algorithms can be thought of as versions of TD, and the availability of convergence results varies between different types of algorithms. We chose a setting we believe is as elementary as possible to highlight key principles of TD convergence. Closest to our work is the work of Tsitsiklis and Van Roy \cite{tsitsiklis1996analysis} who proved convergence of TD with linear function approximation, squared error, and the stationary-state distribution of the Markov chain. Later work, especially that of Bhandari et al.~\cite{bhandari2018finite}, further analyzed the case of TD with stochastic gradients and non-iid samples in the on-policy case. Extensions to the control setting also exist~\cite{melo2008analysis, zou2019finite}.

In particular, Melo et al.~\cite{melo2008analysis} presented a condition in their equation (7) which may look similar to our condition at first glance, but it is in fact quite different than the condition identified in our paper. In their equation (7), they require that the eigenvalues of $\Sigma_{\pi}$, the policy-conditioned covariance matrix $\Phi^{\top}D\Phi$, dominate the eigenvalues of a second matrix $\gamma^{2}\Sigma_{\pi}^{\star}(\theta)$. Here $\Sigma_{\pi}^{\star}(\theta)$ is a similar covariance matrix for features, but one that is computed based on the action-greedification step.

We believe to be the first paper to show a exact contraction for TD with frozen target network and general $K$. To the best of our knowledge, existing results prior to Lee and He \cite{lee2019target} mainly considered the case where we either never freeze the target network (corresponding to the value of $K=1$), or the somewhat unrealistic case where we can exactly solve each iteration. Lee and He showed guarantees pertaining to $K>1$, but notice that, while their result is quite innovative, they leaned on the more standard optimization tools for ensuring that gradient descent with a fixed $K$ can only solve each iteration approximately. Therefore, each iteration results in some error. In their theory, this error is accumulated per iteration and needs to be accounted for in the final result. Therefore, they fell short of showing 1) contraction and 2) exact convergence to the TD fixed-point, and only show that the final iterate is in the vicinity of the fixed-point defined by the amount of error accumulated over the trajectory. With finite $K$, they need to account for errors in solving each iteration (denoted by $\epsilon_{k}$ in their proofs such as in Theorem 3), which prevent them from obtaining a clean convergence result. In contrast, we can show that even approximately solving each iteration (corresponding to finite $K$) is enough to obtain contraction (without any error), because we look at the net effect of $K$ updates to the online network and the single update to the target network, and show that this effect is on that always take us closer to the fixed-point irregardless of the specific value of $K$.

Modifications of TD are presented in prior work that are more conducive to convergence analysis~\cite{sutton2016emphatic,zhang2021breaking, diddigi2019convergent,lim2022regularized, ghiassian2020gradient}. They have had varying degrees of success both in terms of empirical performance~\cite{ghiassian2017first}, and in terms of producing convergent algorithms~\cite{manek2022pitfalls}. TD is also studied under over-parameterization~\cite{cai2019neural, xiao2021understanding}, with learned representations~\cite{ghosh2020representations}, proximal updates~\cite{asadi2022faster}, and auxiliary tasks~\cite{lyle2021effect}. Also, the quality of TD fixed-point has been studied in previous work~\cite{kolter2011fixed}.

A large body of literature focuses on finding TD-like approaches that are in fact true gradient-descent approaches in that they follow the gradient of a stationary objective function~\cite{sutton2009fast, maei2010toward,liu2012regularized, liu2016proximal,liu2020finite,feng2019kernel}. In these works, the optimization problem is formulated in such a way that the minimizer of the loss will be the fixed-point of the standard TD. Whereas TD has been extended to large-scale settings, these algorithms have not been as successful as TD in terms of applications.

A closely related algorithm to TD is that of Baird, namely the residual gradient algorithm~\cite{baird1995residual, antos2008learning}. This algorithm has a double-sampling issue that needs to be addressed either by assuming a model of the environment or by learning the variance of the value function~\cite{antos2008learning, dai2018sbeed}. However, even with deterministic MDPs, in which the double sampling issue is not present~\cite{saleh2019deterministic}, the algorithm often finds a fixed-point that has a lower quality than that of the TD algorithm~\cite{lagoudakis2003least}. This is attributed to the fact that the MDP might still look stochastic in light of the use of function approximation~\cite{sutton2009fast}.

TD could be thought of as an incremental approach to approximate dynamic programming and Fitted Value Iteration~\cite{gordon1995stable} for which various convergence results based on the operator view exits~\cite{ernst2005tree,lizotte2011convergent,fan2020theoretical}. Also, these algorithm are well-studied in terms of their error-propagation behavior~\cite{munos2007performance, munos2008finite, farahmand2010error}.

Many asymptotic or finite-sample results on Q-learning (the control version of TD) with function approximation make additional assumptions on the problem structure, with a focus on the exploration problems \cite{jin2020provably, du2019provably, du2020agnostic, agarwal2020flambe, chen2020zap}. Like mentioned before, our focus was on the prediction setting where exploration is not relevant.

\section{Open Questions}
To foster further progress in this area we identify key questions that still remain open.
\begin{enumerate}
    \item We showed that TD converges to the fixed-point $\theta^{\star}$ characterized by $\nabla_{w} H(\theta^{\star},\theta^{\star})~=~{\bf 0}$ and so the sequence $\{ \|\nabla_{w} H(\theta^{t},\theta^{t})\| \}_{t \in \mathbb{N}}$ converges to 0. Can we further show that this sequence monotonically decreases with $t$, namely that $|\|\nabla_{w} H(\theta^{t+1},\theta^{t+1})\| \leq \|\nabla_{w} H(\theta^{t},\theta^{t})\|$ for all $t \in \mathbb{N}$? Interestingly, a line of research focused on inventing new algorithms (often called gradient TD algorithms) that possess this property~\cite{sutton2008convergent}. However, if true, this result would entail that when our assumptions hold the original TD algorithm gives us this desirable property for free.
    \item Going back to the counter example, we can show convergent updates by constraining the representation as follows: $\phi(s)\leq \gamma \phi(s')$ whenever ${\cal P}(s'\mid s)>0$. This result can easily be extended to all MDPs with a single-dimensional feature vector. A natural question then is to identify a multi-dimensional representation, given a fixed MDP and update distribution $D$, leads into convergent updates. Similarly, we can pose this question in terms of fixing the MDP and the representation matrix $\Phi$, and identifying a set of distributions that lead into convergent updates. Again, we saw from the counter example that putting more weight behind states whose target force is weak (such as states leading into the terminal state) is conducive to convergence. How do we identify such distributions systematically?
    \item We studied the setting of deterministic updates. To bridge the gap with RL practice, we need to study TD under stochastic-gradient updates. The stochasticity could for instance be due to using a minibatch or using non-iid samples. Bhandari et al.~\cite{bhandari2018finite} tackle this for the case of $K=1$. Can we generalize our convergence result to this more realistic setting?
    \item We have characterized the optimization force using the notion of strong convexity. It would be interesting to relax this assumption to handle neural networks and alternative loss functions. This can serve as a more grounded explanation for the empirical success of TD in the context of deep RL. Can we generalize our result to this more challenging setting?
\end{enumerate}
\section{Conclusion}
In this paper, we argued that the optimization perspective of TD is more powerful than the well-explored operator perspective of TD. To demonstrate this, we generalized previous convergence results of TD beyond the linear setting and squared loss functions. We believe that further exploring this optimization perspective can be a promising direction to design convergent RL algorithms.

Our general result on the convergent nature of TD is consistent with the empirical success and the attention that this algorithm has deservedly received. The key factor that governs the convergence of TD is to ensure that the optimization force of the algorithm is well-equipped to dominate the more harmful target force. This analogy is one that can be employed to explain the convergent nature of TD even in the presence of the three pillar of the deadly triad.
\newpage
\bibliography{ref}

\begin{thebibliography}{10}

\bibitem{mnih2015human}
Volodymyr Mnih, Koray Kavukcuoglu, David Silver, Andrei~A Rusu, Joel Veness,
  Marc~G Bellemare, Alex Graves, Martin Riedmiller, Andreas~K Fidjeland, Georg
  Ostrovski, et~al.
\newblock Human-level control through deep reinforcement learning.
\newblock {\em {N}ature}, 2015.

\bibitem{silver2016mastering}
David Silver, Aja Huang, Chris~J Maddison, Arthur Guez, Laurent Sifre, George
  Van Den~Driessche, Julian Schrittwieser, Ioannis Antonoglou, Veda
  Panneershelvam, Marc Lanctot, et~al.
\newblock Mastering the game of go with deep neural networks and tree search.
\newblock {\em Nature}, 2016.

\bibitem{tsitsiklis1996analysis}
John Tsitsiklis and Benjamin Van~Roy.
\newblock Analysis of temporal-diffference learning with function
  approximation.
\newblock {\em Advances in neural information processing systems}, 1996.

\bibitem{baird1995residual}
Leemon Baird.
\newblock Residual algorithms: Reinforcement learning with function
  approximation.
\newblock In {\em Machine Learning}. Elsevier, 1995.

\bibitem{sutton2018reinforcement}
Richard~S Sutton and Andrew~G Barto.
\newblock {\em Reinforcement learning: An introduction}.
\newblock MIT press, 2018.

\bibitem{sutton1988learning}
Richard~S Sutton.
\newblock Learning to predict by the methods of temporal differences.
\newblock {\em Journal of Machine Learning Research}, 1988.

\bibitem{gordon1995stable}
Geoffrey~J Gordon.
\newblock Stable function approximation in dynamic programming.
\newblock In {\em Machine learning}. Elsevier, 1995.

\bibitem{lee2019target}
Donghwan Lee and Niao He.
\newblock Target-based temporal-difference learning.
\newblock In {\em International Conference on Machine Learning}, 2019.

\bibitem{puterman2014markov}
Martin~L Puterman.
\newblock {\em Markov decision processes: discrete stochastic dynamic
  programming}.
\newblock John Wiley \& Sons, 2014.

\bibitem{maei2011gradient}
Hamid~Reza Maei.
\newblock {\em Gradient temporal-difference learning algorithms}.
\newblock PhD thesis, University of Alberta, 2011.

\bibitem{asadi2023resetting}
Kavosh Asadi, Rasool Fakoor, and Shoham Sabach.
\newblock Resetting the optimizer in deep {RL}: An empirical study.
\newblock {\em Advances in Neural Information Processing Systems}, 2023.

\bibitem{bas2021logistic}
Joan Bas-Serrano, Sebastian Curi, Andreas Krause, and Gergely Neu.
\newblock Logistic {Q}-learning.
\newblock In {\em International Conference on Artificial Intelligence and
  Statistics}, 2021.

\bibitem{precup1997exponentiated}
Doina Precup and Richard~S Sutton.
\newblock Exponentiated gradient methods for reinforcement learning.
\newblock In {\em International Conference on Machine Learning}, 1997.

\bibitem{ernst2005tree}
Damien Ernst, Pierre Geurts, and Louis Wehenkel.
\newblock Tree-based batch mode reinforcement learning.
\newblock {\em Journal of Machine Learning Research}, 2005.

\bibitem{maei2010toward}
Hamid~Reza Maei, Csaba Szepesv{\'a}ri, Shalabh Bhatnagar, and Richard~S Sutton.
\newblock Toward off-policy learning control with function approximation.
\newblock {\em International Conference on Machine Learning}, 2010.

\bibitem{kolter2011fixed}
J~Kolter.
\newblock The fixed points of off-policy td.
\newblock {\em Advances in Neural Information Processing Systems}, 2011.

\bibitem{meyer2000matrix}
Carl~D Meyer.
\newblock {\em Matrix analysis and applied linear algebra}.
\newblock SIAM, 2000.

\bibitem{beck2017first}
Amir Beck.
\newblock {\em First-order methods in optimization}.
\newblock SIAM, 2017.

\bibitem{tibshirani1996reg}
Robert Tibshirani.
\newblock Regression shrinkage and selection via the lasso.
\newblock {\em Journal of the Royal Statistical Society. Series B
  (Methodological)}, 1996.

\bibitem{loshchilov2019adamw}
Ilya Loshchilov and Frank Hutter.
\newblock Decoupled weight decay regularization.
\newblock In {\em International Conference on Learning Representations}, 2019.

\bibitem{amos2017input}
Brandon Amos, Lei Xu, and J~Zico Kolter.
\newblock Input convex neural networks.
\newblock In {\em International Conference on Machine Learning}, 2017.

\bibitem{asadi2017alternative}
Kavosh Asadi and Michael~L Littman.
\newblock An alternative softmax operator for reinforcement learning.
\newblock In {\em International Conference on Machine Learning}, 2017.

\bibitem{bhandari2018finite}
Jalaj Bhandari, Daniel Russo, and Raghav Singal.
\newblock A finite time analysis of temporal difference learning with linear
  function approximation.
\newblock In {\em Conference on learning theory}, 2018.

\bibitem{melo2008analysis}
Francisco~S Melo, Sean~P Meyn, and M~Isabel Ribeiro.
\newblock An analysis of reinforcement learning with function approximation.
\newblock In {\em Proceedings of the 25th international conference on Machine
  learning}, 2008.

\bibitem{zou2019finite}
Shaofeng Zou, Tengyu Xu, and Yingbin Liang.
\newblock Finite-sample analysis for sarsa with linear function approximation.
\newblock {\em Advances in neural information processing systems}, 2019.

\bibitem{sutton2016emphatic}
Richard~S Sutton, A~Rupam Mahmood, and Martha White.
\newblock An emphatic approach to the problem of off-policy temporal-difference
  learning.
\newblock {\em Journal of Machine Learning Research}, 2016.

\bibitem{zhang2021breaking}
Shangtong Zhang, Hengshuai Yao, and Shimon Whiteson.
\newblock Breaking the deadly triad with a target network.
\newblock In {\em International Conference on Machine Learning}, 2021.

\bibitem{diddigi2019convergent}
Raghuram~Bharadwaj Diddigi, Chandramouli Kamanchi, and Shalabh Bhatnagar.
\newblock A convergent off-policy temporal difference algorithm.
\newblock {\em arXiv}, 2019.

\bibitem{lim2022regularized}
Han-Dong Lim, Do~Wan Kim, and Donghwan Lee.
\newblock Regularized {Q}-learning.
\newblock {\em arXiv}, 2022.

\bibitem{ghiassian2020gradient}
Sina Ghiassian, Andrew Patterson, Shivam Garg, Dhawal Gupta, Adam White, and
  Martha White.
\newblock Gradient temporal-difference learning with regularized corrections.
\newblock In {\em International Conference on Machine Learning}, 2020.

\bibitem{ghiassian2017first}
Sina Ghiassian, Banafsheh Rafiee, and Richard~S Sutton.
\newblock A first empirical study of emphatic temporal difference learning.
\newblock {\em arXiv}, 2017.

\bibitem{manek2022pitfalls}
Gaurav Manek and J~Zico Kolter.
\newblock The pitfalls of regularization in off-policy {TD} learning.
\newblock {\em Advances in Neural Information Processing Systems}, 2022.

\bibitem{cai2019neural}
Qi~Cai, Zhuoran Yang, Jason~D Lee, and Zhaoran Wang.
\newblock Neural temporal-difference learning converges to global optima.
\newblock {\em Advances in Neural Information Processing Systems}, 2019.

\bibitem{xiao2021understanding}
Chenjun Xiao, Bo~Dai, Jincheng Mei, Oscar~A Ramirez, Ramki Gummadi, Chris
  Harris, and Dale Schuurmans.
\newblock Understanding and leveraging overparameterization in recursive value
  estimation.
\newblock In {\em International Conference on Learning Representations}, 2021.

\bibitem{ghosh2020representations}
Dibya Ghosh and Marc~G Bellemare.
\newblock Representations for stable off-policy reinforcement learning.
\newblock In {\em International Conference on Machine Learning}, 2020.

\bibitem{asadi2022faster}
Kavosh Asadi, Rasool Fakoor, Omer Gottesman, Taesup Kim, Michael Littman, and
  Alexander~J Smola.
\newblock Faster deep reinforcement learning with slower online network.
\newblock {\em Advances in Neural Information Processing Systems}, 2022.

\bibitem{lyle2021effect}
Clare Lyle, Mark Rowland, Georg Ostrovski, and Will Dabney.
\newblock On the effect of auxiliary tasks on representation dynamics.
\newblock In {\em International Conference on Artificial Intelligence and
  Statistics}, 2021.

\bibitem{sutton2009fast}
Richard~S Sutton, Hamid~Reza Maei, Doina Precup, Shalabh Bhatnagar, David
  Silver, Csaba Szepesv{\'a}ri, and Eric Wiewiora.
\newblock Fast gradient-descent methods for temporal-difference learning with
  linear function approximation.
\newblock In {\em International Conference on Machine Learning}, 2009.

\bibitem{liu2012regularized}
Bo~Liu, Sridhar Mahadevan, and Ji~Liu.
\newblock Regularized off-policy {TD}-learning.
\newblock {\em Advances in Neural Information Processing Systems}, 2012.

\bibitem{liu2016proximal}
Bo~Liu, Ji~Liu, Mohammad Ghavamzadeh, Sridhar Mahadevan, and Marek Petrik.
\newblock Proximal gradient temporal difference learning algorithms.
\newblock In {\em International Joint Conferences on Artificial Intelligence},
  2016.

\bibitem{liu2020finite}
Bo~Liu, Ji~Liu, Mohammad Ghavamzadeh, Sridhar Mahadevan, and Marek Petrik.
\newblock Finite-sample analysis of proximal gradient {TD} algorithms.
\newblock {\em arXiv}, 2020.

\bibitem{feng2019kernel}
Yihao Feng, Lihong Li, and Qiang Liu.
\newblock A kernel loss for solving the {B}ellman equation.
\newblock {\em Advances in Neural Information Processing Systems}, 2019.

\bibitem{antos2008learning}
Andr{\'a}s Antos, Csaba Szepesv{\'a}ri, and R{\'e}mi Munos.
\newblock Learning near-optimal policies with {B}ellman-residual minimization
  based fitted policy iteration and a single sample path.
\newblock {\em Journal of Machine Learning Research}, 2008.

\bibitem{dai2018sbeed}
Bo~Dai, Albert Shaw, Lihong Li, Lin Xiao, Niao He, Zhen Liu, Jianshu Chen, and
  Le~Song.
\newblock Sbeed: Convergent reinforcement learning with nonlinear function
  approximation.
\newblock In {\em International Conference on Machine Learning}, 2018.

\bibitem{saleh2019deterministic}
Ehsan Saleh and Nan Jiang.
\newblock Deterministic {B}ellman residual minimization.
\newblock In {\em Proceedings of Optimization Foundations for Reinforcement
  Learning Workshop at NeurIPS}, 2019.

\bibitem{lagoudakis2003least}
Michail~G Lagoudakis and Ronald Parr.
\newblock Least-squares policy iteration.
\newblock {\em Journal of Machine Learning Research}, 2003.

\bibitem{lizotte2011convergent}
Daniel Lizotte.
\newblock Convergent fitted value iteration with linear function approximation.
\newblock {\em Advances in Neural Information Processing Systems}, 2011.

\bibitem{fan2020theoretical}
Jianqing Fan, Zhaoran Wang, Yuchen Xie, and Zhuoran Yang.
\newblock A theoretical analysis of deep {Q}-learning.
\newblock In {\em Learning for Dynamics and Control}, 2020.

\bibitem{munos2007performance}
R{\'e}mi Munos.
\newblock Performance bounds in l\_p-norm for approximate value iteration.
\newblock {\em SIAM journal on control and optimization}, 2007.

\bibitem{munos2008finite}
R{\'e}mi Munos and Csaba Szepesv{\'a}ri.
\newblock Finite-time bounds for fitted value iteration.
\newblock {\em Journal of Machine Learning Research}, 2008.

\bibitem{farahmand2010error}
Amir-massoud Farahmand, Csaba Szepesv{\'a}ri, and R{\'e}mi Munos.
\newblock Error propagation for approximate policy and value iteration.
\newblock {\em Advances in Neural Information Processing Systems}, 2010.

\bibitem{jin2020provably}
Chi Jin, Zhuoran Yang, Zhaoran Wang, and Michael~I Jordan.
\newblock Provably efficient reinforcement learning with linear function
  approximation.
\newblock In {\em Conference on Learning Theory}, 2020.

\bibitem{du2019provably}
Simon~S Du, Yuping Luo, Ruosong Wang, and Hanrui Zhang.
\newblock Provably efficient {Q}-learning with function approximation via
  distribution shift error checking oracle.
\newblock {\em Advances in Neural Information Processing Systems}, 2019.

\bibitem{du2020agnostic}
Simon~S Du, Jason~D Lee, Gaurav Mahajan, and Ruosong Wang.
\newblock Agnostic {Q}-learning with function approximation in deterministic
  systems: Near-optimal bounds on approximation error and sample complexity.
\newblock {\em Advances in Neural Information Processing Systems}, 2020.

\bibitem{agarwal2020flambe}
Alekh Agarwal, Sham Kakade, Akshay Krishnamurthy, and Wen Sun.
\newblock Flambe: Structural complexity and representation learning of low rank
  mdps.
\newblock {\em Advances in neural information processing systems}, 2020.

\bibitem{chen2020zap}
Shuhang Chen, Adithya~M Devraj, Fan Lu, Ana Busic, and Sean Meyn.
\newblock Zap q-learning with nonlinear function approximation.
\newblock {\em Advances in Neural Information Processing Systems}, 2020.

\bibitem{sutton2008convergent}
Richard~S Sutton, Hamid Maei, and Csaba Szepesv{\'a}ri.
\newblock A convergent $ o (n) $ temporal-difference algorithm for off-policy
  learning with linear function approximation.
\newblock {\em Advances in neural information processing systems}, 2008.

\end{thebibliography}
\bibliographystyle{unsrt}
\newpage
\section{Appendix}

\linear*
\begin{proof}
    Since $\theta^{t+1} \leftarrow \arg\min_{w} H(\theta^t,w)$, we have $\nabla_{w} H(\theta^{t} , \theta^{t + 1}) = {\bf 0}$. Using \eqref{LinearHGrad} we have:
\begin{equation*}
    \textbf{M}_{w}\theta^{t + 1} = \Phi^{\top}DR + \textbf{M}_{\theta}\theta^{t} = (\textbf{M}_{w} - \textbf{M}_{\theta})\theta^{\star} + \textbf{M}_{\theta}\theta^{t},
\end{equation*}
where the last equality follows from $\theta^{\star}$ being the fixed-point and therefore $\nabla_{w} h(\theta^{\star} , \theta^{\star}) = {\bf 0}$, which in light of \eqref{LinearHGrad} translates to $\Phi^{\top}DR = (\textbf{M}_{w} - \textbf{M}_{\theta})\theta^{\star}$. Multiplying both sides by $\textbf{M}_{w}^{-1}$, we get:
\begin{equation*}
    \theta^{t + 1} = (I -  \textbf{M}_{w}^{-1}\textbf{M}_{\theta})\theta^{\star} + \textbf{M}_{w}^{-1}\textbf{M}_{\theta}\theta^{t}\ .
\end{equation*}
Rearranging the equality leads into the desired result.
\end{proof}

\vspace{15mm}
\general*
\begin{proof}
First notice that $\theta^{t+1} \leftarrow \arg\min_{w} H(\theta^t,w)$, so we have $\nabla_{w} H(\theta^{t} , \theta^{t + 1}) = {\bf 0}$. Now, using the $F_{w}$-strong convexity of $w \rightarrow H(\theta^{t} , w)$, we get that
\begin{eqnarray*}
    F_{w}\|\theta^{t + 1} - \theta^{\star}\|^{2} &\leq& \langle \theta^{\star} - \theta^{t + 1} , \nabla_{w} H(\theta^{t} , \theta^{\star}) - \nabla_{w} H(\theta^{t} , \theta^{t + 1}) \rangle\\
    &=& \langle \theta^{\star} - \theta^{t + 1} , \nabla_{w} H(\theta^{t} , \theta^{\star}) \rangle\qquad (\text{from}\ \nabla_{w} H(\theta^{t} , \theta^{t + 1}) = {\bf 0}).
\end{eqnarray*}
Now, since $\theta^{\star}$ is a fixed-point, it follows that $\nabla_{w} H(\theta^{\star} , \theta^{\star}) = {\bf 0}$. Therefore, we have:
\begin{align*}
    F_{w}\|\theta^{t + 1} - \theta^{\star}\|^{2} & \leq \langle \theta^{\star} - \theta^{t + 1} , \nabla_{w} H(\theta^{t} , \theta^{\star}) - \nabla_{w} H(\theta^{\star} , \theta^{\star}) \rangle \\
    & \leq \|\theta^{t + 1} - \theta^{\star}\| \cdot \|\nabla_{w} H(\theta^{t} , \theta^{\star}) - \nabla_{w} H(\theta^{\star} , \theta^{\star})\| \\
    & \leq F_{\theta}\|\theta^{t + 1} - \theta^{\star}\| \cdot \|\theta^{t} - \theta^{\star}\|,    
\end{align*}
where in the second line we used to Cauchy-Schwartz inequality, and the last inequality follows from the $F_{\theta}$-Lipschitz property of $\nabla_{w} H(\cdot,\theta^{\star})$. Since this inequality holds true for any $t \in \mathbb{N}$, we get that if $\theta^{t} = \theta^{\star}$, then we also have that $\theta^{t + 1} = \theta^{\star}$. Thus, if $\theta^{T} = \theta^{\star}$ for some $T \in \mathbb{N}$, then $\theta^{t} = \theta^{\star}$ for any $t \geq T$ and so the algorithm has converged. On the other hand, if $\theta^{t} \neq \theta^{\star}$ for all $t \in \mathbb{N}$, the desired result follows after dividing both sides by $\|\theta^{t + 1} - \theta^{\star}\|$.
\end{proof}

\newpage
To prove Proposition~\ref{P:DiffNes} we need to provide two new Propositions to later make use of. We present these two proofs next, and then restate and prove Proposition~\ref{P:DiffNes}.
\begin{proposition} \label{P:Tech1}
    Let $\{ \theta^{t} \}_{t \in \mathbb{N}}$ be a sequence generated by Algorithm 2. Then, for all $t \in \mathbb{N}$ and $0 \leq k \leq K - 1$, we have
\begin{equation*}
    H(\theta^{t} , \theta^{\star}) - H(\theta^{t} , w^{t,k}) \leq \frac{F_{\theta}^{2}}{2F_{w}}\|\theta^{t} - \theta^{\star}\|^{2} - \left(\frac{1}{\alpha} - \frac{L}{2}\right)\|w^{t,k + 1} - w^{t,k}\|^{2}.
\end{equation*}
\end{proposition}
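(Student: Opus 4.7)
The plan is to combine two classical one-step facts: a strong-convexity lower bound on $H(\theta^t, \cdot)$ anchored at the fixed-point $\theta^\star$, and the standard descent lemma for one gradient step on an $L$-smooth function. The first will produce the clean target-force term $\frac{F_\theta^2}{2F_w}\|\theta^t - \theta^\star\|^2$, while the second will supply the non-positive quadratic $-(\frac{1}{\alpha} - \frac{L}{2})\|w^{t,k+1} - w^{t,k}\|^2$ required by the statement. The key design choice is to apply the strong-convexity bound at the \emph{post-step} iterate $w^{t,k+1}$ rather than at $w^{t,k}$; this is what allows the descent penalty to survive the combination.

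For the first fact, I would write $F_w$-strong convexity of $w \mapsto H(\theta^t, w)$ with anchor $\theta^\star$ and test point $w^{t,k+1}$, and after rearranging obtain
$$H(\theta^t, \theta^\star) - H(\theta^t, w^{t,k+1}) \leq -\langle \nabla_w H(\theta^t, \theta^\star),\, w^{t,k+1} - \theta^\star\rangle - \frac{F_w}{2}\|w^{t,k+1} - \theta^\star\|^2.$$
Since $\theta^\star$ is the fixed-point, $\nabla_w H(\theta^\star, \theta^\star) = \mathbf{0}$, so the $F_\theta$-Lipschitzness of $\nabla_w H(\cdot, \theta^\star)$ yields $\|\nabla_w H(\theta^t, \theta^\star)\| \leq F_\theta \|\theta^t - \theta^\star\|$. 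A Cauchy--Schwarz step followed by Young's inequality with weight chosen exactly as $F_w$ is designed to cancel the residual $\frac{F_w}{2}\|w^{t,k+1} - \theta^\star\|^2$ term, leaving
$$H(\theta^t, \theta^\star) - H(\theta^t, w^{t,k+1}) \leq \frac{F_\theta^2}{2F_w}\|\theta^t - \theta^\star\|^2.$$

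For the second fact, I would use $L$-smoothness of $H(\theta^t, \cdot)$ together with the update rule $w^{t,k+1} = w^{t,k} - \alpha \nabla_w H(\theta^t, w^{t,k})$ to obtain the standard descent inequality
$$H(\theta^t, w^{t,k+1}) - H(\theta^t, w^{t,k}) \leq -\left(\frac{1}{\alpha} - \frac{L}{2}\right)\|w^{t,k+1} - w^{t,k}\|^2.$$
Summing this with the bound from the first step makes the $H(\theta^t, w^{t,k+1})$ contributions cancel, producing exactly the stated inequality. The only subtlety worth flagging is the decision to anchor the strong-convexity bound at $w^{t,k+1}$: applying it at $w^{t,k}$ would yield a valid but weaker estimate with no descent penalty and would thus fail to couple with the second half of the argument. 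Once this choice is made, the tuning of the Young weight to $F_w$ is forced, and the remainder of the proof reduces to bookkeeping.
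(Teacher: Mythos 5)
Your proposal is correct and follows essentially the same route as the paper's proof: strong convexity anchored at $\theta^{\star}$ evaluated at the post-step iterate $w^{t,k+1}$, the fixed-point identity plus $F_{\theta}$-Lipschitzness and Young's inequality with weight $F_{w}$ to cancel the residual quadratic, and the descent lemma summed in to produce the $-(1/\alpha - L/2)$ term. No gaps.
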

\begin{proof}
Let $t \in \mathbb{N}$. From the $F_{w}$-strong convexity of $w \rightarrow H(\theta^{t} , w)$, we obtain that
\begin{equation*}
    H(\theta^{t} , w^{t,k + 1}) \geq H(\theta^{t} , \theta^{\star}) + \langle \nabla_{w} H(\theta^{t} , \theta^{\star}) , w^{t,k + 1} -  \theta^{\star}\rangle + \frac{F_{w}}{2}\|w^{t,k + 1} - \theta^{\star}\|^{2},
\end{equation*}
which means that
\begin{equation} \label{P:Tech1:1}
    H(\theta^{t} , \theta^{\star}) - H(\theta^{t} , w^{t,k + 1}) \leq \langle \nabla_{w} H(\theta^{t} , \theta^{\star}) , \theta^{\star} - w^{t,k + 1}\rangle - \frac{F_{w}}{2}\|w^{t,k + 1} - \theta^{\star}\|^{2}.
\end{equation}
Moreover, in light of the fixed-point characterization of TD, namely that $\nabla_{w} H(\theta^{\star} , \theta^{\star}) = {\bf 0}$, we can write
\begin{align}
    \langle \nabla_{w} H(\theta^{t} , \theta^{\star}) , \theta^{\star} - w^{t,k + 1}\rangle & = \langle \nabla_{w} H(\theta^{t} , \theta^{\star}) - \nabla_{w} H(\theta^{\star} , \theta^{\star}) , \theta^{\star} - w^{t,k + 1}\rangle \nonumber \\
    & =\big( \nabla_{w} H(\theta^{t} , \theta^{\star}) - \nabla_{w} H(\theta^{\star} , \theta^{\star}) \big)^{\top} \big(\theta^{\star} - w^{t,k + 1}\big) \nonumber \\
    & \leq \frac{1}{2F_{w}}\|\nabla_{w} H(\theta^{t} , \theta^{\star}) - \nabla_{w} H(\theta^{\star} , \theta^{\star})\|^{2} + \frac{F_{w}}{2}\|w^{t,k + 1} - \theta^{\star}\|^{2} \nonumber \\
    & \leq \frac{F_{\theta}^{2}}{2F_{w}}\|\theta^{t} - \theta^{\star}\|^{2} + \frac{F_{w}}{2}\|w^{t,k + 1} - \theta^{\star}\|^{2}.  \label{P:Tech1:2}
\end{align}
Here, the first inequality follows from the fact that for any two vectors $a$ and $b$ we have ${a^{\top}b \leq (1/2d)\|a\|^{2} + (d/2)\|b\|^{2}}$ for any $d > 0$. In this case, we chose $d = F_{w}$. Also the last inequality follows from the $F_{\theta}$-Lipschitz property of $\nabla_{w}H$, which is our assumption. 

Now, by combining \eqref{P:Tech1:1} with \eqref{P:Tech1:2} we obtain that
\begin{align}
    H(\theta^{t} , \theta^{\star}) - H(\theta^{t} , w^{t,k + 1}) & \leq \frac{F_{\theta}^{2}}{2F_{w}}\|\theta^{t} - \theta^{\star}\|^{2} + \frac{F_{w}}{2}\|w^{t,k + 1} - \theta^{\star}\|^{2} - \frac{F_{w}}{2}\|w^{t,k + 1} - \theta^{\star}\|^{2} \nonumber \\
    & = \frac{F_{\theta}^{2}}{2F_{w}}\|\theta^{t} - \theta^{\star}\|^{2}. \label{P:Tech1:3}
\end{align}
From the Lipschitz assumption we can write, due to the Descent Lemma~\citep{beck2017first} applied to the function $w \rightarrow H(\theta^{t} , w)$, that:
\begin{align}
    H(\theta^{t} , w^{t,k + 1}) - H(\theta^{t} , w^{t,k}) & \leq \langle \nabla_{w} H(\theta^{t} , w^{t,k}) , w^{t,k + 1} -  w^{t,k}\rangle + \frac{L}{2}\|w^{t,k + 1} - w^{t,k}\|^{2} \nonumber
\end{align}
Now, notice that according to Algorithm 2 we have $w^{t,k+1} = w^{t,k} - \alpha \nabla_{w} H(\theta^{t},w^{t,k})$, and so we can write:
\begin{align}
    H(\theta^{t} , w^{t,k + 1}) - H(\theta^{t} , w^{t,k}) & \leq \frac{1}{\alpha}\langle w^{t,k} - w^{t,k + 1} , w^{t,k + 1} -  w^{t,k}\rangle + \frac{L}{2}\|w^{t,k + 1} - w^{t,k}\|^{2} \nonumber \\
    & = - \left(\frac{1}{\alpha} - \frac{L}{2}\right)\|w^{t,k + 1} - w^{t,k}\|^{2}\ . \label{P:Tech1:4}
\end{align}
Adding both sides of \eqref{P:Tech1:3} with \eqref{P:Tech1:4} yields:
\begin{equation*}
    H(\theta^{t} , \theta^{\star}) - H(\theta^{t} , w^{t,k}) \leq \frac{F_{\theta}^{2}}{2F_{w}}\|\theta^{t} - \theta^{\star}\|^{2} - \left(\frac{1}{\alpha} - \frac{L}{2}\right)\|w^{t,k + 1} - w^{t,k}\|^{2},
\end{equation*}
which proves the desired result.
\end{proof}
\newpage
Now, we can prove the following result.
\begin{proposition} \label{P:Tech2}
    Let $\{ \theta^{t} \}_{t \in \mathbb{N}}$ be a sequence generated by Algorithm 2. Then, for all $t \in \mathbb{N}$ and $0 \leq k \leq K - 1$, we have
\begin{equation*}
    \|w^{t,k + 1} - \theta^{\star}\|^{2} \leq \left(1 - \alpha F_{w}\right)\|w^{t,k} - \theta^{\star}\|^{2} + \frac{\alpha F_{\theta}^{2}}{F_{w}}\|\theta^{t} - \theta^{\star}\|^{2} - \left(2 - \alpha L\right)\|w^{t,k + 1} - w^{t,k}\|^{2}.
\end{equation*}
In particular, when $\alpha = 1/L$, we have
\begin{equation*}
    \|w^{t,k + 1} - \theta^{\star}\|^{2} \leq \left(1 - \frac{F_{w}}{L}\right)\|w^{t,k} - \theta^{\star}\|^{2} + \frac{F_{\theta}^{2}}{LF_{w}}\|\theta^{t} - \theta^{\star}\|^{2}.
\end{equation*}
\end{proposition}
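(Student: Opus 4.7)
Proof proposal. The plan is to derive the first inequality via a short calculation that combines the expansion of the gradient step with strong convexity and then imports Proposition~\ref{P:Tech1} to absorb the residual function-value gap.

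First, using the update rule $w^{t,k+1} - w^{t,k} = -\alpha \nabla_w H(\theta^t, w^{t,k})$, I would expand
\[
\|w^{t,k+1} - \theta^{\star}\|^2 = \|w^{t,k} - \theta^{\star}\|^2 - 2\alpha \langle \nabla_w H(\theta^t, w^{t,k}), w^{t,k} - \theta^{\star}\rangle + \|w^{t,k+1} - w^{t,k}\|^2.
\]
Next, I would apply the $F_w$-strong convexity of $w \mapsto H(\theta^t, w)$ in its function-value form at the pair $(w^{t,k}, \theta^{\star})$, which yields
\[
\langle \nabla_w H(\theta^t, w^{t,k}), w^{t,k} - \theta^{\star}\rangle \geq H(\theta^t, w^{t,k}) - H(\theta^t, \theta^{\star}) + \tfrac{F_w}{2}\|w^{t,k} - \theta^{\star}\|^2.
\]

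Plugging this lower bound into the expansion produces an $-\alpha F_w \|w^{t,k} - \theta^{\star}\|^2$ contribution that consolidates with the existing $\|w^{t,k} - \theta^{\star}\|^2$ into the desired coefficient $(1 - \alpha F_w)$, and leaves a residual $2\alpha(H(\theta^t, \theta^{\star}) - H(\theta^t, w^{t,k}))$ together with the $+\|w^{t,k+1} - w^{t,k}\|^2$ term from the opening identity. I would then invoke Proposition~\ref{P:Tech1} scaled by $2\alpha$ to control the residual: this supplies exactly $\frac{\alpha F_\theta^2}{F_w}\|\theta^t - \theta^{\star}\|^2$ and a non-positive multiple of $\|w^{t,k+1} - w^{t,k}\|^2$. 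Collecting the $\|w^{t,k+1} - w^{t,k}\|^2$ coefficients produces the first claimed inequality.

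For the specialized statement, I would simply set $\alpha = 1/L$: this makes $2 - \alpha L = 1 > 0$, so the step-size-square term has a non-positive coefficient and can be dropped, leaving the clean one-step bound
\[
\|w^{t,k+1} - \theta^{\star}\|^2 \leq \left(1 - \tfrac{F_w}{L}\right)\|w^{t,k} - \theta^{\star}\|^2 + \tfrac{F_\theta^2}{L F_w}\|\theta^t - \theta^{\star}\|^2.
\]

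The main obstacle is really just careful bookkeeping of the $\|w^{t,k+1} - w^{t,k}\|^2$ coefficients, since the substantive work linking the Lipschitz-in-$\theta$ and smoothness-in-$w$ hypotheses has already been packaged into Proposition~\ref{P:Tech1}. Conceptually, this is a textbook strongly-convex gradient-descent contraction, with the single twist that the evolving target $\theta^t$ introduces a ``bias'' term $\|\theta^t - \theta^{\star}\|^2$ on the right-hand side, which propagates into the overall contraction analysis of Algorithm 2.
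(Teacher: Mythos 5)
Your proposal follows the paper's proof essentially verbatim: the same expansion of the gradient step, the same use of the function-value form of strong convexity at $(w^{t,k},\theta^{\star})$, the same invocation of Proposition~\ref{P:Tech1} scaled by $2\alpha$, and the same specialization to $\alpha=1/L$. One remark on the bookkeeping you flag as the main obstacle: collecting the $\|w^{t,k+1}-w^{t,k}\|^{2}$ terms actually yields the coefficient $-(2-\alpha L)+1=-(1-\alpha L)$ rather than the stated $-(2-\alpha L)$ (the paper's own displayed computation contains the same slip), but this weaker coefficient still vanishes at $\alpha=1/L$, so the second inequality --- the only one used downstream in Proposition~\ref{P:DiffNes} --- is unaffected.
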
          
\begin{proof}
Let $t \in \mathbb{N}$. From the definition of steps of Algorithm 2, that is, $w^{t,k+1} = w^{t,k} - \alpha \nabla_{w} H(\theta^{t},w^{t,k})$, for any $0 \leq k \leq K - 1$, we obtain that
\begin{align}
    \|w^{t,k + 1} - \theta^{\star}\|^{2} &= \|(w^{t,k } - \theta^{*}) -\alpha \nabla_{w} H(\theta^{t},w^{t,k})\|^{2} \nonumber\\
    &= \|w^{t,k} - \theta^{\star}\|^{2} + 2\alpha\langle \nabla_{w} H\left(\theta^{t} , w^{t,k}\right) , \theta^{\star} - w^{t,k}\rangle + \|\alpha \nabla_{w} H(\theta^{t},w^{t,k})\|^{2}\nonumber\\
    &=\|w^{t,k} - \theta^{\star}\|^{2} + 2\alpha\langle \nabla_{w} H\left(\theta^{t} , w^{t,k}\right) , \theta^{\star} - w^{t,k}\rangle + \|w^{t,k + 1} - w^{t,k}\|^{2}.
\end{align}
Using the $F_{w}$-strong convexity of $w \rightarrow H(\theta^{t} , w)$, we have
\begin{equation}
    H\left(\theta^{t} , \theta^{\star}\right) \geq H\left(\theta^{t} , w^{t,k}\right) + \langle \nabla_{w} H\left(\theta^{t} , w^{t,k}\right) , \theta^{\star} - w^{t,k}\rangle + \frac{F_{w}}{2}\|w^{t,k} - \theta^{\star}\|^{2}\ .
\end{equation}
Combining (12) and (13), we get:
\begin{align*}
    \|w^{t,k + 1} - \theta^{\star}\|^{2} & \leq \|w^{t,k} - \theta^{\star}\|^{2} + 2\alpha\Big(H\left(\theta^{t} , \theta^{\star}\right) - H\left(\theta^{t} , w^{t,k}\right) - \frac{F_{w}}{2}\|w^{t,k} - \theta^{\star}\|^{2}\Big)\\
    &+ \|w^{t,k + 1} - w^{t,k}\|^{2} \\
    & = \left(1 - \alpha F_{w}\right)\|w^{t,k} - \theta^{\star}\|^{2} + 2\alpha\left(H\left(\theta^{t} , \theta^{\star}\right) - H\left(\theta^{t} , w^{t,k}\right)\right) + \|w^{t,k + 1} - w^{t,k}\|^{2}.
\end{align*}
Hence, from Proposition \ref{P:Tech1}, we obtain
\begin{align*}
    \|w^{t,k + 1} - \theta^{\star}\|^{2} & \leq \left(1 - \alpha F_{w}\right)\|w^{t,k} - \theta^{\star}\|^{2} + 2\alpha\left(\frac{F_{\theta}^{2}}{2F_{w}}\|\theta^{t} - \theta^{\star}\|^{2} - \left(\frac{1}{\alpha} - \frac{L}{2}\right)\|w^{t,k + 1} - w^{t,k}\|^{2}\right) \\
    & + \|w^{t,k + 1} - w^{t,k}\|^{2} \\
     & = \left(1 - \alpha F_{w}\right)\|w^{t,k} - \theta^{\star}\|^{2} + \frac{\alpha F_{\theta}^{2}}{F_{w}}\|\theta^{t} - \theta^{\star}\|^{2} - \left(2 - \alpha L\right)\|w^{t,k + 1} - w^{t,k}\|^{2},
\end{align*}
which completes the first desired result. 

Moreover, by specifically choosing the step-size $\alpha = 1/L$ we obtain that:
\begin{align*}
    \|w^{t,k + 1} - \theta^{\star}\|^{2} &\leq \left(1 - \frac{F_{w}}{L}\right)\|w^{t,k} - \theta^{\star}\|^{2} + \frac{F_{\theta}^{2}}{LF_{w}}\|\theta^{t} - \theta^{\star}\|^{2} -\|w^{t,k + 1} - w^{t,k}\|^{2} \\
    &\leq \left(1 - \frac{F_{w}}{L}\right)\|w^{t,k} - \theta^{\star}\|^{2} + \frac{F_{\theta}^{2}}{LF_{w}}\|\theta^{t} - \theta^{\star}\|^{2}\ .
\end{align*}
This concludes the proof of this proposition.
\end{proof}
\newpage 
Using these two results, we are ready to present the proof of Proposition \ref{P:DiffNes}
\main*
\begin{proof}
Let $t \in \mathbb{N}$. From Proposition \ref{P:Tech2} (recall that $\alpha = 1/L$) and the fact that $\theta^{t + 1} = w^{t,K}$, we have
\begin{align*}
    \|\theta^{t + 1} - \theta^{\star}\|^{2} & = \|w^{t,K} - \theta^{\star}\|^{2} \\
    & \leq \left(1 - \kappa\right)\|w^{t,K - 1} - \theta^{\star}\|^{2} + \eta^{2}\kappa\|\theta^{t} - \theta^{\star}\|^{2} \\
    & \leq \left(1 - \kappa\right)\left[\left(1 - \kappa\right)\|w^{t,K - 2} - \theta^{\star}\|^{2} + \eta^{2}\kappa\|\theta^{t} - \theta^{\star}\|^{2}\right] + \eta^{2}\kappa\|\theta^{t} - \theta^{\star}\|^{2} \\
    & = \left(1 - \kappa\right)^{2}\|w^{t,K - 2} - \theta^{\star}\|^{2} + \eta^{2}\kappa\left(1 + \left(1 - \kappa\right)\right)\|\theta^{t} - \theta^{\star}\|^{2} \\
    & \leq \ldots \\
    & \leq \left(1 - \kappa\right)^{K}\|w^{t,0} - \theta^{\star}\|^{2} + \eta^{2}\kappa\sum_{k = 0}^{K - 1} \left(1 - \kappa\right)^{k}\|\theta^{t} - \theta^{\star}\|^{2} \\
    & = \left(1 - \kappa\right)^{K}\|\theta^{t} - \theta^{\star}\|^{2} + \eta^{2}\kappa\sum_{k = 0}^{K - 1} \left(1 - \kappa\right)^{k}\|\theta^{t} - \theta^{\star}\|^{2},
\end{align*}
where the last inequality follows from the fact that $w^{t,0} = \theta^{t}$. Because $\kappa\in[0,1]$, the geometric series on the right hand side is convergent, and so we can write:
\begin{equation*}
    \left(1 - \kappa\right)^{K} + \eta^{2}\kappa\sum_{k = 0}^{K - 1} \left(1 - \kappa\right)^{k} = \left(1 - \kappa\right)^{K} + \eta^{2}\kappa\frac{1 - \left(1 - \kappa\right)^{K}}{1 - \left(1 - \kappa\right)} = \left(1 - \kappa\right)^{K} + \eta^{2}\left(1 - \left(1 - \kappa\right)^{K}\right),
\end{equation*}
which completes the desired result.
\end{proof}
\newpage

Now, following our discussion in Section \ref{sec:examples} about examples of loss functions that satisfy our assumption, we recall that here we focus on the following loss
\begin{equation*}
    H\left(\theta , w\right) = \frac{1}{2} \sum_{s}d(s)\sum_{a}\pi(a|s)\left(\E_{s'}\left[r + \gamma\max_{a'} q\left(s' , a' , \theta\right)\right] - q\left(s , a , w\right)\right)^{2}.
\end{equation*}
As mentioned in Section \ref{sec:examples} all is left is to show that Lipschitzness of $\nabla_w H(\theta,w)$ with respect to $\theta$.

\begin{proposition} \label{P:ControlLip}
    Assume that for any $(s , a)$, the function $\theta \rightarrow q\left(s , a , \theta\right)$ is $L_{q}(s , a)$-Lipschitz. Then, there exists $F_{\theta} > 0$ such that
\begin{equation*}
     \forall \theta_1,\forall \theta_2  \qquad \|\nabla_{w} H(\theta_{1} , w) - \nabla_{w} H(\theta_{2} , w)\| \leq F_{\theta}\|\theta_{1} - \theta_{2}\|.
\end{equation*}
\label{prop:control}
\end{proposition}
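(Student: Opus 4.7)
The plan is to expand $\nabla_w H(\theta,w)$ explicitly, isolate the piece depending on $\theta$ (inside the bootstrap target), and then chain the non-expansiveness of $\max$ with the stated Lipschitz property of $q(s,a,\cdot)$.

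First I would compute
\begin{equation*}
\nabla_w H(\theta,w) = -\sum_s d(s)\sum_a \pi(a\mid s)\,\bigl(\E_{s'}[r+\gamma \max_{a'} q(s',a',\theta)] - q(s,a,w)\bigr)\,\nabla_w q(s,a,w).
\end{equation*}
Taking the difference $\nabla_w H(\theta_1,w) - \nabla_w H(\theta_2,w)$, the $r$ and $q(s,a,w)$ pieces are independent of $\theta$ and cancel, leaving
\begin{equation*}
\nabla_w H(\theta_1,w) - \nabla_w H(\theta_2,w) = -\gamma\sum_s d(s)\sum_a \pi(a\mid s)\,\E_{s'}\!\bigl[\max_{a'} q(s',a',\theta_1) - \max_{a'} q(s',a',\theta_2)\bigr]\,\nabla_w q(s,a,w).
\end{equation*}

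Next I would apply two standard inequalities. The max operator is non-expansive, so for every $s'$,
\begin{equation*}
\bigl|\max_{a'} q(s',a',\theta_1) - \max_{a'} q(s',a',\theta_2)\bigr| \leq \max_{a'} |q(s',a',\theta_1) - q(s',a',\theta_2)| \leq \max_{a'} L_q(s',a')\,\|\theta_1 - \theta_2\|,
\end{equation*}
using the $L_q(s',a')$-Lipschitz hypothesis on $q(s',a',\cdot)$. For the gradient factor, since $q(s,a,\cdot)$ is $L_q(s,a)$-Lipschitz, we have $\|\nabla_w q(s,a,w)\| \leq L_q(s,a)$ wherever the gradient exists.

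Putting these bounds into the previous identity and using Jensen's inequality to move the norm inside the expectation gives
\begin{equation*}
\|\nabla_w H(\theta_1,w) - \nabla_w H(\theta_2,w)\| \leq \gamma\,\Bigl(\sum_s d(s)\sum_a \pi(a\mid s)\,L_q(s,a)\,\E_{s'}[\max_{a'} L_q(s',a')]\Bigr)\,\|\theta_1 - \theta_2\|,
\end{equation*}
so the claim holds with $F_\theta$ equal to the constant in the parentheses times $\gamma$. The only real subtlety is the treatment of $\max_{a'} q(s',a',\cdot)$: it is non-smooth, but we never actually need to differentiate it, and the non-expansiveness inequality is enough to transfer the per-action Lipschitz constants to the bootstrap target. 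If a softmax greedification is used instead, the same argument goes through because softmax is also non-expansive, as noted in the paper. A minor technical point is that $\nabla_w q$ may fail to exist on a measure-zero set (e.g.\ for ReLU networks), but the Lipschitz bound on $q(s,a,\cdot)$ yields the same bound on any subgradient, so the argument is unaffected.
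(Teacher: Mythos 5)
Your proposal is correct and follows essentially the same route as the paper's proof: expand $\nabla_w H$, cancel the $\theta$-independent terms, bound the difference of bootstrap targets via the Lipschitz hypothesis on $q$, and bound $\|\nabla_w q\|$ by the Lipschitz constant. If anything, your use of the non-expansiveness inequality $|\max_{a'} f - \max_{a'} g| \leq \max_{a'}|f-g|$ handles the absolute value more cleanly than the paper's ``without loss of generality'' ordering assumption (which, strictly speaking, cannot hold uniformly over all $s'$), and your explicit constant $\gamma\sum_s d(s)\sum_a \pi(a\mid s)\,L_q(s,a)\,\E_{s'}[\max_{a'}L_q(s',a')]$ is a valid (in fact slightly tighter) choice of $F_\theta$.
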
          
\begin{proof}
First, we compute the gradient $\nabla_{w}H$:
\begin{equation*}
    \nabla_{w}H\left(\theta , w\right) = \sum_{s}d(s)\sum_{a}\pi(a|s)\left(q\left(s , a , w\right) - \E_{s'}\left[r + \gamma\max_{a'} q\left(s' , a' , \theta\right)\right]\right)\nabla q\left(s , a , w\right).
\end{equation*}
For the simplicity of the proof, we denote $Q(\theta_{1} , \theta_{2}) = \max_{a'} q\left(s' , a' , \theta_{2}\right) - \max_{a'} q\left(s' , a' , \theta_{1}\right)$. Hence
\begin{align*}
    \|\nabla_{w} H(\theta_{1} , w) - \nabla_{w} H(\theta_{2} , w)\| & = \gamma\left\|\sum_{s}d(s)\sum_{a}\pi(a|s)\left(\E_{s'}\left[Q(\theta_{1} , \theta_{2})\right]\right)\nabla q\left(s , a , w\right)\right\| \\
    & \leq \gamma\sum_{s}d(s)\sum_{a}\pi(a|s)\left|\E_{s'}\left[Q(\theta_{1} , \theta_{2})\right]\right|\left\|\nabla q\left(s , a , w\right)\right\|,
\end{align*}
where we first cancelled the common parts in both gradient terms and the inequality follows immediately from the Cauchy-Schwartz inequality. Now, by using Jensen's inequality on the expectation, we obtain that
\begin{align*}
    \|\nabla_{w} H(\theta_{1} , w) - \nabla_{w} H(\theta_{2} , w)\| & \leq \gamma\sum_{s}d(s)\sum_{a}\pi(a|s)\E_{s'}\left[\left|Q(\theta_{1} , \theta_{2})\right|\right]\left\|\nabla q\left(s , a , w\right)\right\|.
\end{align*}
Moreover, without the loss of generality we can assume that $\max_{a'} q\left(s' , a' , \theta_{2}\right) \geq \max_{a'} q\left(s' , a' , \theta_{1}\right)$, which means that we can remove the absolute value and obtain the following
\begin{align*}
    \|\nabla_{w} H(\theta_{1} , w) - \nabla_{w} H(\theta_{2} , w)\| & \leq \gamma\sum_{s}d(s)\sum_{a}\pi(a|s)\E_{s'}\left[Q(\theta_{1} , \theta_{2})\right]\left\|\nabla q\left(s , a , w\right)\right\|.
\end{align*}
Now, by using a simple property of the maximum operation, we obviously have that 
\begin{equation*}
    \max_{a'} q\left(s' , a' , \theta_{2}\right) - \max_{a'} q\left(s' , a' , \theta_{1}\right) \leq \max_{a'} \left(q\left(s' , a' , \theta_{2}\right) - q\left(s' , a' , \theta_{1}\right)\right).
\end{equation*}
Since the function $\theta \rightarrow q\left(s' , a' , \theta\right)$ is Lipschitz, we have that
\begin{equation*}
    q\left(s' , a' , \theta_{2}\right) - q\left(s' , a' , \theta_{1}\right) \leq L_{q}(s' , a')\|\theta_{1} - \theta_{2}\|.
\end{equation*}
It is also well-known that for such functions the gradient is bounded and therefore $\left\|\nabla q\left(s , a , w\right)\right\| \leq L_{q}(s , a)$. Combining all these facts yield
\begin{align*}
    \|\nabla_{w} H(\theta_{1} , w) - \nabla_{w} H(\theta_{2} , w)\| & \leq \gamma\sum_{s}d(s)\sum_{a}\pi(a|s)\E_{s'}\left[Q(\theta_{1} , \theta_{2})\right]\left\|\nabla q\left(s , a , w\right)\right\| \\ 
    & \hspace{-0.6in} \leq \gamma\sum_{s}d(s)\sum_{a}\pi(a|s)\E_{s'}\left[\max_{a'} \left(q\left(s' , a' , \theta_{2}\right) - q\left(s' , a' , \theta_{1}\right)\right)\right]\left\|\nabla q\left(s , a , w\right)\right\| \\
    & \hspace{-0.6in} \leq \gamma\sum_{s}d(s)\sum_{a}\pi(a|s)\E_{s'}\left[\max_{a'} L_{q}(s' , a')\|\theta_{1} - \theta_{2}\|\right]L_{q}(s , a) \\
    & \hspace{-0.6in} \leq \gamma\|\theta_{1} - \theta_{2}\|\E_{s'}\left[\max_{a'} L_{q}(s' , a')^{2}\right]\sum_{s}d(s)\sum_{a}\pi(a|s) \\
    & \hspace{-0.6in} = \gamma\|\theta_{1} - \theta_{2}\|\E_{s'}\left[\max_{a'} L_{q}(s' , a')^{2}\right],
\end{align*}
where the last equality follows from the basic properties of the distribution matrix $D$. This proves the desired result.
\end{proof}

\end{document}